\newtheorem{mydef}{Definition}
\newtheorem{mytheorem}{Theorem}
\newtheorem{mylemma}{Lemma}
\begin{document}

\mainmatter  

\title{Get my pizza right: \\
Repairing missing is-a relations \\
in ${\cal ALC}$ ontologies \\
(extended version)}

\titlerunning{Repairing missing is-a relations in ${\cal ALC}$ ontologies}
%
%
\author{Patrick Lambrix
\and  Zlatan Dragisic
\and Valentina Ivanova}

\institute{Department of Computer and Information Science\\
and Swedish e-Science Research Centre\\
Link\"oping University, 581 83 Link\"oping, Sweden\\
\mailsa\\
\url{http://www.ida.liu.se/}
}

%
%

\maketitle

\begin{abstract}
With the increased use of ontologies in semantically-enabled applications, the issue of debugging defects in ontologies has become increasingly important. These defects can lead to wrong or incomplete results for the applications.
Debugging consists of the phases of detection and repairing. In this paper we focus on the repairing phase of a particular kind of defects, i.e. the missing relations in the is-a hierarchy. Previous work has dealt with the case of taxonomies. In this work we extend the scope to deal with ${\cal ALC}$ ontologies that can be represented using acyclic terminologies. We present algorithms and discuss a system.
\end{abstract}

{\bf This is an extended version of \cite{LDI12}.}

\section{Introduction}
\label{section-introduction}

Developing ontologies is not an easy task, and often the resulting ontologies are not consistent or complete. Such ontologies, although often useful, also lead to problems when used in semantically-enabled applications. Wrong conclusions may be derived or valid conclusions may be missed.  Defects in ontologies can take different forms (e.g. \cite{KPSH06}). Syntactic defects are usually easy to find and to resolve. Defects regarding style include such things as unintended redundancy. More interesting and severe defects are the modeling defects which require domain knowledge to detect and resolve, and semantic defects such as unsatisfiable concepts and inconsistent ontologies. Most work up to date has focused on debugging (i.e. detecting and repairing) the semantic defects in an ontology (e.g. \cite{KPSH06,KPSC06,Sch05,CRVP09}). Modeling defects have mainly been discussed in \cite{BH08,LLT09,LL11} for taxonomies, i.e. from a knowledge representation point of view, a simple kind of ontologies. The focus has been on defects regarding the is-a structure of the taxonomies.  In this paper we tackle the problem of repairing the is-a structure of ${\cal ALC}$ ontologies that can be represented using acyclic terminologies. 

In addition to its importance for the correct modeling of a domain, the structural information in ontologies is also important in semantically-enabled applications. For instance, the is-a structure is used in ontology-based search and annotation. In ontology-based search, queries are refined and expanded by moving up and down the hierarchy of concepts. Incomplete structure in ontologies influences the quality of the search results. As an example, suppose we want to find articles in the MeSH (Medical Subject Headings \cite{MeSH}, controlled vocabulary of the National Library of Medicine, US) Database of PubMed \cite{PubMed} using the term \emph{Scleral Diseases} in MeSH. By default the query will follow the hierarchy of MeSH and include more specific terms for searching, such as \emph{Scleritis}. If the relation between \emph{Scleral Diseases} and \emph{Scleritis} is missing in MeSH, we will miss 738 articles in the search result, which is about 55\% of the original result. 
The structural information is also important information in ontology engineering research. For instance, most current ontology alignment systems use structure-based strategies to find mappings between the terms in different ontologies (e.g. overview in \cite{LST09}) and the modeling defects in the structure of the ontologies have an important influence on the quality of the ontology alignment results. 

Debugging modeling defects in ontologies consists of two phases, detection and repair. There are different ways to detect missing is-a relations. One way is by inspection of the ontologies by domain experts. 
Another way is to use external knowledge sources. For instance, there is much work on finding relationships between terms in the ontology learning area \cite{CBM05}. In this setting, new ontology elements are derived from text using knowledge acquisition techniques. Regarding the detection of is-a relations, one paradigm is based on linguistics using lexico-syntactic patterns. The pioneering research conducted in this line is in \cite{Hea92}, which defines a set of patterns indicating is-a relationships between words in the text.  Another paradigm is based on machine learning and statistical methods. Further, guidelines based on logical patterns can be used \cite{CRVP09}.
When the ontology is part of a network of ontologies connected by mappings between the ontologies, knowledge inherent in the ontology network can be used to detect missing is-a relations using logical derivation \cite{BH08,LLT09,LL11}. 
However, although there are many approaches to detect missing is-a relations, these approaches, in general, do not detect {\it all} missing is-a relations. For instance, although the precision for the linguistics-based approaches is high, their recall is usually very low.

In this paper we assume that the detection phase has been performed. We assume that we have obtained a set of missing is-a relations for a given ontology and focus on the repairing phase. In the case where our set of missing is-a relations contains {\it all} missing is-a relations, the repairing phase is easy. We just add all missing is-a relations to the ontology and a reasoner can compute all logical consequences. However, when the set of missing is-a relations does not contain all missing is-a relations - and this is the common case - there are different ways to repair the ontology.
The easiest way is still to just add the missing is-a relations to the ontology. For instance, Figure \ref{small-pizza-ontology} shows a small part of a pizza ontology based on \cite{pizza}, that is relevant for our discussions. Assume that we have detected that  MyPizza $\dot{\sqsubseteq}$  FishyMeatyPizza and MyFruttiDiMare $\dot{\sqsubseteq}$ NonVegetarianPizza are missing is-a relations. Obviously, adding MyPizza $\dot{\sqsubseteq}$  FishyMeatyPizza and MyFruttiDiMare $\dot{\sqsubseteq}$ NonVegetarianPizza to the ontology will repair the missing is-a structure.
 However, there are other more interesting possibilities. For instance, adding AnchoviesTopping $\dot{\sqsubseteq}$ FishTopping and ParmaHamTopping $\dot{\sqsubseteq}$ MeatTopping will also repair the missing is-a structure. Another more informative way\footnote{This is more informative in the sense that the former is derivable from the latter. Adding ParmaHamTopping $\dot{\sqsubseteq}$ HamTopping, also allows to derive  ParmaHamTopping $\dot{\sqsubseteq}$ MeatTopping as the ontology already includes HamTopping $\dot{\sqsubseteq}$ MeatTopping.} to repair the missing is-a structure is to add AnchoviesTopping $\dot{\sqsubseteq}$ FishTopping and ParmaHamTopping $\dot{\sqsubseteq}$ HamTopping. Essentially, these other possibilities to repair the ontology include missing is-a relations (e.g. AnchoviesTopping $\dot{\sqsubseteq}$ FishTopping) that were not originally detected by the detection algorithm.\footnote{Therefore, the approach discussed in this paper can also be seen as a detection method that takes already found missing is-a relations as input.}
We also note that from a logical point of view, adding
AnchoviesTopping $\dot{\sqsubseteq}$ MeatTopping and ParmaHamTopping $\dot{\sqsubseteq}$ FishTopping also repairs the missing is-a structure. However, from the point of view of the domain, this solution is not correct. Therefore, as it is the case for all approaches for debugging modeling defects, a domain expert needs to validate the logical solutions.

The contributions of this paper are threefold. First, we show that the problem of finding possible ways to repair the missing is-a structure in an ontology in general can be formalized as a generalized version of the TBox abduction problem (Section \ref{section-problem}). Second, we propose an algorithm to generate different ways to repair the missing is-a structure in ${\cal ALC}$ ontologies that can be represented using acyclic terminologies (Section \ref{section-algorithm}). Third, we discuss a system that allows a domain expert to repair the missing is-a structure in Section  
\ref{section-implementation}. We discuss the functionality and user interface of the repairing system and show an example run. Further, we discuss related work in Section \ref{section-related-work} and conclude in Section \ref{section-conclusion}. We continue, however, with some necessary preliminaries in Section \ref{section-preliminaries}.

\begin{figure}[tb]
\scriptsize
\begin{center}
\begin{tabular}{| l |} \hline

Pizza  $\dot{\sqsubseteq}$ $\top$ \\
PizzaTopping  $\dot{\sqsubseteq}$ $\top$\\
hasTopping  $\dot{\sqsubseteq}$ $\top$ $\times$ $\top$\\

AnchoviesTopping  $\dot{\sqsubseteq}$ PizzaTopping \\

MeatTopping $\dot{\sqsubseteq}$ PizzaTopping\\

HamTopping $\dot{\sqsubseteq}$ MeatTopping\\

ParmaHamTopping $\dot{\sqsubseteq}$ PizzaTopping \\

FishTopping $\dot{\sqsubseteq}$ PizzaTopping $\sqcap$ $\neg$MeatTopping \\

TomatoTopping $\dot{\sqsubseteq}$ PizzaTopping $\sqcap$  $\neg$MeatTopping $\sqcap$  $\neg$FishTopping\\

GarlicTopping $\dot{\sqsubseteq}$ PizzaTopping $\sqcap$  $\neg$MeatTopping $\sqcap$  $\neg$FishTopping\\

MyPizza $\doteq$  Pizza $\sqcap$ $\exists$ hasTopping.AnchoviesTopping $\sqcap$ $\exists$ hasTopping.ParmaHamTopping \\

FishyMeatyPizza  $\doteq$  Pizza $\sqcap$ $\exists$ hasTopping.FishTopping $\sqcap$ $\exists$ hasTopping.MeatTopping \\

MyFruttiDiMare $\doteq$ Pizza $\sqcap$ $\exists$ hasTopping.AnchoviesTopping \\
\hspace{1.5cm} $\sqcap$  $\exists$ hasTopping.GarlicTopping 
 $\sqcap$ $\exists$ hasTopping.TomatoTopping \\
\hspace{1.5cm} $\sqcap$ $\forall$ hasTopping.(AnchoviesTopping $\sqcup$ GarlicTopping  $\sqcup$ TomatoTopping)\\

VegetarianPizza  $\doteq$ Pizza $\sqcap$ $\neg$ $\exists$ hasTopping.FishTopping $\sqcap$ $\neg$ $\exists$ hasTopping.MeatTopping\\

NonVegetarianPizza $\doteq$ Pizza $\sqcap$  $\neg$VegetarianPizza\\

\hline
\end{tabular}
\end{center}
\caption{A pizza ontology.}
\label{small-pizza-ontology}
\end{figure}

\begin{figure}
\scriptsize
\begin{center}
\begin{tabular}{| l |} \hline

AnchoviesTopping  $\doteq$ PizzaTopping  $\sqcap$ $\overline{AnchoviesTopping}$    \\

MeatTopping $\doteq$ PizzaTopping   $\sqcap$  $\overline{MeatTopping}$ \\

HamTopping $\doteq$ MeatTopping  $\sqcap$  $\overline{HamTopping}$ \\

ParmaHamTopping $\doteq$ PizzaTopping  $\sqcap$   $\overline{ParmaHamTopping}$ \\

FishTopping $\doteq$ PizzaTopping $\sqcap$ $\neg$MeatTopping  $\sqcap$   $\overline{FishTopping}$  \\

TomatoTopping $\doteq$ PizzaTopping $\sqcap$  $\neg$MeatTopping $\sqcap$  $\neg$FishTopping  $\sqcap$   $\overline{TomatoTopping}$  \\

GarlicTopping $\doteq$ PizzaTopping $\sqcap$  $\neg$MeatTopping $\sqcap$  $\neg$FishTopping $\sqcap$   $\overline{GarlicTopping}$   \\

MyPizza $\doteq$  Pizza $\sqcap$ $\exists$ hasTopping.AnchoviesTopping $\sqcap$ $\exists$ hasTopping.ParmaHamTopping \\

FishyMeatyPizza  $\doteq$  Pizza $\sqcap$ $\exists$ hasTopping.FishTopping $\sqcap$ $\exists$ hasTopping.MeatTopping \\

MyFruttiDiMare $\doteq$ Pizza $\sqcap$ $\exists$ hasTopping.AnchoviesTopping \\
\hspace{1.5cm} $\sqcap$  $\exists$ hasTopping.GarlicTopping 
 $\sqcap$ $\exists$ hasTopping.TomatoTopping \\
\hspace{1.5cm} $\sqcap$ $\forall$ hasTopping.(AnchoviesTopping $\sqcup$ GarlicTopping  $\sqcup$ TomatoTopping)\\

VegetarianPizza  $\doteq$ Pizza $\sqcap$ $\neg$ $\exists$ hasTopping.FishTopping $\sqcap$ $\neg$ $\exists$ hasTopping.MeatTopping\\

NonVegetarianPizza $\doteq$ Pizza $\sqcap$  $\neg$VegetarianPizza\\

\hline
\end{tabular}
\end{center}
\caption{A pizza ontology - Acyclic ${\cal ALC}$ terminology.}
\label{small-pizza-ontology-acyclic-terminology}
\end{figure}

\section{Preliminaries}
\label{section-preliminaries}

In this paper we deal with ontologies represented in the description logic ${\cal ALC}$ with acyclic terminologies (e.g. \cite{BS01}). Concept descriptions are defined using constructors as well as concept and role names. As constructors ${\cal ALC}$ allows concept conjunction (C $\sqcap$ D), disjunction (C $\sqcup$ D), negation ($\neg$ C), universal quantification ($\forall$ r.C) and existential quantification ($\exists$ r.C).\footnote{C and D represent concepts, and r represents a role.} In this paper we consider ontologies that can be represented by a TBox that is an acyclic terminology. An acyclic terminology is a finite set of concept definitions (i.e. terminological axioms of the form $C$ $\doteq$ $D$ where $C$ is a concept name) that neither contains multiple definitions nor cyclic definitions.\footnote{We observe that the TBox in Figure \ref{small-pizza-ontology} is not an acyclic terminology as there are statements of the form $A$ $\dot{\sqsubseteq}$ $C$. However, it is possible to create an equivalent TBox that is a acyclic terminology by replacing the statements of the form $A$ $\dot{\sqsubseteq}$ $C$ with $A$ $\doteq$ $C$ $\sqcap$  $\overline{A}$ where $\overline{A}$ is new atomic concept. See Figure \ref{small-pizza-ontology-acyclic-terminology}.}
 An ABox contains assertional knowledge, i.e. statements about the membership of individuals (interpreted as elements in the domain) to concepts (C(i)) as well as relations between individuals (r(i,j)).\footnote{i and j represent individuals. In the completion graph in Section \ref{section-algorithm} statements of the form C(i) are written as $i:C$, and statements of the form r(i,j) are written as $irj$.} A knowledge base contains a TBox and an ABox. A model of the TBox/ABox/knowledge base satisfies all axioms of the TBox/ABox/knowledge base. A knowledge base is consistent if it does not contain contradictions.

An important reasoning service is the checking of (un)satisfilibility of concepts (a concept is unsatisfiable if it is necessarily interpreted as the empty set in all models of the TBox/ABox/knowledge base, satisfiable otherwise). A TBox is incoherent if it contains an unsatisfiable concept.

Checking satisfiability of concepts in ${\cal ALC}$ can be done using a tableau-based algorithm (e.g. \cite{BS01}). To test whether a concept C is satisfiable such an algorithm starts with an ABox containing the statement C(x) where x is a new individual and it is usually assumed that C is normalized to negation normal form. It then applies consistency-preserving transformation rules to the ABox (Figure \ref{transformation-rules}). The $\sqcap$-, $\forall$- and $\exists$-rules extend the ABox while the $\sqcup$-rule creates multiple ABoxes. The algorithm continues applying these transformation rules to the ABoxes until no more rules apply. This process is called completion and if one of the final ABoxes does not contain a contradiction (we say that it is open), then satisfiability is proven, otherwise unsatisfiability is proven.
One way of implementing this approach is through completion graphs which are directed graphs in which every node represents an ABox.  Application of the $\sqcup$-rule produces new nodes with one statement each, while the other rules add statements to the node on which the rule is applied. The ABox for a node contains all the statements of the node as well as the statements of the nodes on the path to the root. Satisfiability is proven if at least one of the ABoxes connected to a leaf node does not contain a contradiction, otherwise unsatisfiability is proven.

In this paper we assume that an ontology O is represented by a knowledge base containing a TBox that is an acyclic terminology and an empty ABox. In this case reasoning can be reduced to reasoning without the TBox by unfolding the definitions. However, for efficiency reasons, instead of running the previously described satisfiability checking algorithm on an unfolded concept description, the unfolding is usually performed on demand within the satisifiability checking algorithm. It has been proven that satisfiability checking w.r.t. acyclic terminologies is PSPACE-complete in  ${\cal ALC}$ \cite{Lutz99}.

\begin{figure}[tb]
\begin{center}
\scriptsize
\begin{tabular}{| l |}
\hline
$\sqcap$-rule: if the ABox contains (C$_1$ $\sqcap$ C$_2$)(x), but it does not contain both C$_1$(x) and C$_2$(x), \\
\hspace*{1cm} then these are added to the ABox. 
\\
$\sqcup$-rule: if the ABox contains (C$_1$ $\sqcup$ C$_2$)(x), but it contains neither C$_1$(x) nor C$_2$(x), \\
\hspace*{1cm} then two ABoxes are created representing the two choices of adding C$_1$(x) or adding C$_2$(x).
\\
$\forall$-rule: if the ABox contains ($\forall$ r.C)(x) and r(x,y), but it does not contain C(y),\\
\hspace*{1cm} then this is added to the ABox.
\\
$\exists$-rule: if the ABox contains ($\exists$ r.C)(x) but there is no individual z such that r(x,z) and C(z) are in the ABox, \\
\hspace*{1cm} then r(x,y) and C(y) with y an individual name not occurring in the ABox, are added.\\
\hline
\end{tabular}
\end{center}
\caption{Transformation rules (e.g. \protect\cite{BS01}).}
\label{transformation-rules}
\end{figure}

\section{An abduction problem}
\label{section-problem}

In our setting, a missing is-a relation in an ontology O represented by a knowledge base KB, is an is-a relation between named concepts that is not derivable from the KB, but that is correct according to the intended domain. We assume that we have a set M of missing is-a relations (but not necessarily all) for O.
Then, the is-a structure of O can be repaired by adding is-a relations (or axioms of the form C $\dot{\sqsubseteq}$ D) between named concepts to O such that the missing is-a relations can be derived from the repaired ontology. This repair problem can be formulated as a generalized version of the TBox abduction problem. 

\begin{mydef}

Let KB be a knowledge base in ${\cal L}$,
and for 1 $\leq$ i $\leq$ m: C$_i$, D$_i$ are concepts that are satisfiable w.r.t. KB, such that KB $\cup$  \{ C$_i$ $\dot{\sqsubseteq}$ D$_i$ $\mid$ 1 $\leq$ i $\leq$ m\} is coherent. A solution to the generalized TBox abduction problem for (KB, \{(C$_i$, D$_i$) $\mid$ 1 $\leq$ i $\leq$ m\}) is any finite set S$_{GT}$ = \{G$_j$ $\dot{\sqsubseteq}$  H$_j$ $\mid$ j $\leq$ n\} of TBox assertions in ${\cal L}^\prime$ such that $\forall$ i: KB $\cup$  S$_{GT}$ $\models$ C$_i$ $\dot{\sqsubseteq}$ D$_i$. 
The set of all such solutions is denoted by  S$_{GT}$(KB, \{(C$_i$, D$_i$ )$\mid$ 1 $\leq$ i $\leq$ m\}).

\end{mydef}

In our setting the language ${\cal L}$ is ${\cal ALC}$ and ${\cal L}^\prime$ only allows {\it named} concepts. We say that any solution in S$_{GT}$(KB, \{(C$_i$, D$_i$)\}$_i$) is a {\it repairing action}. A repairing action is thus a set of is-a relations.
When {\it m} = 1, this definition of the generalized TBox abduction problem coincides with the TBox abduction problem as formulated in \cite{elsenbroich2006case}, which therefore deals with repairing one missing is-a relation. 
Further, we have that  
 S$_{GT}$(KB, \{(C$_i$, D$_i$)\}$_i$) = $\cap_i$ S$_{GT}$(KB, \{(C$_i$, D$_i$)\}).\footnote{A solution for all missing is-a relations is also a solution for each missing is-a relation and therefore, in the intersection of the solutions concerning one missing is-a relation at the time. Further, a solution in the intersection of the sets of solutions for each of the missing is-a relations, allows, when added to the knowledge base, to derive all missing is-a relations, and is therefore, a solution for the generalized TBox abduction problem as well.} \\
This shows that solving a generalized TBox abduction problem can be done by solving {\it m} TBox abduction problems and then taking the intersection of the solutions. In practice, however, this leads to a number of difficulties. First, it would mean that a domain expert will need to choose between large sets of repairing actions for all the missing is-a relations at once, and this may be a very hard task. Further, due to the size of the solution space, even generating all solutions for one TBox abduction problem is, in general, infeasible. Also, many of the solutions will not be interesting for a domain expert (e.g. \cite{LLT09}). For instance, there may be solutions containing is-a relations that do not contribute to the actual repairing. Some solutions may introduce unintended equivalence relations.

A common way to limit the number of possible solutions is to introduce constraints, e.g. minimality. Our proposed algorithm generates solutions to a TBox abduction problem that are minimal in the sense that repairing actions only contain necessary information for repairing the missing is-a relations. 
Further, we check the generated solutions for the introduction of incoherence.

\section{Algorithm for generating repairing actions}
\label{section-algorithm}

\begin{figure}[tb]
\begin{center}
\scriptsize
\begin{tabular}{| l |}
\hline
\noindent\textbf{\textit{Input}}:
The ontology O represented by knowledge base KB and a set of missing is-a relations M.\\
\textit{\textbf{Output}}: Set of repairing actions $Rep(M)$.\\
\textbf{\textit{Algorithm}} \\
1. For every missing is-a relation $A_i$ $\dot{\sqsubseteq}$ $B_i$ in M:\\
\hspace*{0.35cm} 1.1 $G = $ completion graph after running a tableaux algorithm with $A_i \sqcap \lnot B_i$ as input on KB;\\
\hspace*{0.35cm} 1.2 {\it Leaf-ABoxes} =  get ABoxes of the leaves from the completion graph $G$; \\ 
\hspace*{0.35cm} 1.3 For every open ABox $\mathcal{A} \in$ {\it Leaf-ABoxes}: \\
\hspace*{0.7cm}  1.3.1 $R_{\mathcal{A}} = \emptyset $; \\
\hspace*{0.7cm}  1.3.2 For every individual $x_j$ in $\mathcal{A}$; \\
\hspace*{1.05cm} 1.3.2.1 $Pos_{x_j} = \{ P \mid x_j:P \in \mathcal{A} \wedge P$ is a named concept\};
\\
\hspace*{1.05cm} 1.3.2.2 $Neg_{x_j} = \{ N \mid x_j:\neg N \in \mathcal{A} \wedge N$ is a named concept\}; \\
\hspace*{1.05cm} 1.3.2.3 $R_{\mathcal{A}}$ =   $R_{\mathcal{A}} \cup \{ P$ $\dot{\sqsubseteq}$ $ N \mid  P \in Pos_{x_j} \land N \in Neg_{x_j}\}$; \\
\hspace*{0.35cm} 1.4 $Rep(A_i, B_i) = \emptyset $; \\
\hspace*{0.35cm} 1.5 As long as there are different choices: \\
\hspace*{0.7cm}  1.5.1 Create a repairing action $ra$ by choosing one element from each set $R_{\mathcal{A}}$; \\
\hspace*{0.7cm}  1.5.2 $Rep(A_i, B_i)$ = $Rep(A_i, B_i)$ $\cup$ \{ $ra$ \}; \\
\hspace*{0.7cm}  1.5.3 Remove reduncancy in $Rep(A_i, B_i)$; \\
\hspace*{0.7cm}  1.5.4 Remove incoherent solutions from $Rep(A_i, B_i)$; \\
2.  $Rep(M)$ = \{ M \}; \\
3. As long as there are different choices: \\
\hspace*{0.35cm} 3.1 Create a repairing action $rp$ by choosing one element from each $Rep(A_i, B_i)$ \\
\hspace*{0.75cm} and taking the union of these elements; \\
\hspace*{0.35cm} 3.2 $Rep(M)$ = $Rep(M)$ $\cup$ \{ rp \}; \\
4. Remove reduncancy in $Rep(M)$; \\
5. Remove incoherent solutions from $Rep(M)$; \\
\hline
\end{tabular}
\end{center}
\caption{Basic algorithm for generating repairing actions using completion graph.}
\label{basic-algorithm}
\end{figure}

\subsection{Basic algorithm}

The basic algorithm for generating repairing actions for a set of given missing is-a relations for an ontology is shown in Figure \ref{basic-algorithm}. In this first study we assume that the existing structure in the ontology is correct. Also, as stated in the definition in Section \ref{section-problem}, adding the missing is-a relations to the ontology does not lead to incoherence. 

In step 1 a set of repairing actions is generated for each missing is-a relation (thereby solving a TBox abduction problem for each missing is-a relation). 
In step 1.1 we run the satisfiability checking algorithm with unfolding on demand as described in Section \ref{section-preliminaries}, on KB with input $A_i \sqcap \lnot B_i$, and we collect the ABoxes of the leaves in step 1.2.
As $A_i$ $\dot{\sqsubseteq}$ $B_i$ is a missing is-a relation, it cannot be derived from KB  and thus the completion graph will have open leaf ABoxes. We then generate different ways to close these ABoxes in step 1.3. For each individual  $x$ in an open leaf ABox we collect the concepts in the statements of the form $x$:$A$ in $Pos_x$ and the concepts in the statements of the form $x$:$\lnot B$ in $Neg_x$ where $A$ and $B$ are named concepts. The ABox can be closed if $A$ $\dot{\sqsubseteq}$ $B$ is added
 to the ontology for any  $A$ $\in$ $Pos_x$ and  $B$ $\in$ $Neg_x$. Indeed, with this extra information $x$:$A$ could be unfolded and $x$:$B$ would be added to the ABox, and this gives a contradiction with  $x$:$\neg B$ which was already in the ABox. A repairing action for the missing is-a relation is then a set of $A$ $\dot{\sqsubseteq}$ $B$ that closes each open leaf ABox. In step 1.5 we generate such sets by selecting one such axiom per open leaf ABox and remove redundancy based on the sub-set relation. If a repairing action is a super-set of another repairing action, it is removed. Further, we remove solutions that introduce incoherence.

In step 2 the repairing actions set for all missing is-a relations is initialized with the set of missing is-a relations. Therefore, there will always be at least one repairing action. In step 3 additional repairing actions are generated by combining repairing actions for the individual missing is-a relations. As repairing actions are sets, there are no duplicates in a repairing action. In step 4 we remove redundancy based on the sub-set relation. In step 5 we remove solutions that introduce incoherence. We note that there always will be at least one solution that does not introduce incoherence (i.e. {\it M} or a sub-set of {\it M}).

As an example, consider the acyclic terminology in Figure \ref{small-pizza-ontology-acyclic-terminology}  equivalent to the ontology in Figure \ref{small-pizza-ontology} and M = \{MyPizza $\dot{\sqsubseteq}$  Fishy\-MeatyPizza, MyFruttiDiMare $\dot{\sqsubseteq}$  NonVegetarianPizza\}. For the missing is-a relation MyPizza $\dot{\sqsubseteq}$  Fishy\-MeatyPizza the completion graph obtained after running the satisfiability check on $MyPizza \sqcap \lnot FishyMeatyPizza$ is shown in Figure \ref{fig:cgraph}.
The completion graph contains 17 nodes of which 11 are leaf nodes. Of these leaf nodes 6 are closed and the repairing actions will be based on the 5 open leaf nodes. The computation of the $Pos_{x_j}$, $Neg_{x_j}$ and $R_{\mathcal{A}}$ for the leaf ABoxes is given in Figure \ref{rep-basic-algorithm}. Closing all open leaf ABoxes will lead to 11 non-redundant repairing actions (see Figure \ref{non-redundant-repairing-actions-single}) of which 8 lead to incoherence (marked by * in  Figure \ref{non-redundant-repairing-actions-single}). The remaining repairing actions are \{MyPizza $\dot{\sqsubseteq}$ FishyMeaty\-Pizza\}, \{AnchoviesTopping $\dot{\sqsubseteq}$ FishTopping,  ParmaHamTopping $\dot{\sqsubseteq}$ MeatTopping \} and \{ParmaHamTopping $\dot{\sqsubseteq}$ FishTopping, AnchoviesTopping $\dot{\sqsubseteq}$ MeatTopping\}. 
Observe that in this example we removed the repairing actions containing a concept of the form  $\overline{A}$. This is because whenever $x$:$\overline{A}$ occurs in an ABox, then also $x$:$A$ is in that ABox, as  $x$:$\overline{A}$ only can appear by unfolding $A$ in $x$:$A$. Therefore, whenever $\overline{A}$ is a choice for repairing, then also $A$ is a choice for repairing, and we can ignore the $\overline{A}$ choice, which does not relate to a concept in the original ontology.

\begin{figure}[tb]
\begin{center}
\scriptsize
\begin{tabular}{| l |}
\hline
\{MyPizza $\dot{\sqsubseteq}$ FishyMeatyPizza\} \\
\{Pizza $\dot{\sqsubseteq}$ FishyMeatyPizza\}$^*$\\ 
\{AnchoviesTopping $\dot{\sqsubseteq}$ FishTopping, AnchoviesTopping $\dot{\sqsubseteq}$ MeatTopping\}$^*$ \\
\{PizzaTopping $\dot{\sqsubseteq}$ FishTopping, AnchoviesTopping $\dot{\sqsubseteq}$ MeatTopping\}$^*$ \\
\{ParmaHamTopping $\dot{\sqsubseteq}$ FishTopping, AnchoviesTopping $\dot{\sqsubseteq}$ MeatTopping\} \\
\{AnchoviesTopping $\dot{\sqsubseteq}$ FishTopping, PizzaTopping $\dot{\sqsubseteq}$ MeatTopping\}$^*$\\
\{PizzaTopping $\dot{\sqsubseteq}$ FishTopping, PizzaTopping $\dot{\sqsubseteq}$ MeatTopping\}$^*$\\
\{ParmaHamTopping $\dot{\sqsubseteq}$ FishTopping, PizzaTopping $\dot{\sqsubseteq}$ MeatTopping\}$^*$\\
\{AnchoviesTopping $\dot{\sqsubseteq}$ FishTopping, ParmaHamTopping $\dot{\sqsubseteq}$ MeatTopping\}\\
\{PizzaTopping $\dot{\sqsubseteq}$ FishTopping, ParmaHamTopping $\dot{\sqsubseteq}$ MeatTopping\}$^*$\\
\{ParmaHamTopping $\dot{\sqsubseteq}$ FishTopping, ParmaHamTopping $\dot{\sqsubseteq}$ MeatTopping\}$^*$\\
\hline
\end{tabular}
\end{center}
\caption{Non-redundant repairing actions for MyPizza $\dot{\sqsubseteq}$ FishyMeatyPizza.}
\label{non-redundant-repairing-actions-single}
\end{figure}

In a similar way, for missing is-a relation MyFruttiDiMare $\dot{\sqsubseteq}$  NonVegetarianPizza, we find the following non-redundant coherence-preserving repairing actions: \{MyFruttiDiMare $\dot{\sqsubseteq}$ NonVegetarianPizza\}, \{AnchoviesTopping $\dot{\sqsubseteq}$ FishTopping\} and \{Anchovies\-Topping $\dot{\sqsubseteq}$ MeatTopping\}.
After combining the repairing actions for the individual missing is-a relations, and removing redundancy and incoherence-introducing repairing actions, we obtain the following 5 solutions for the repairing of the missing is-a structure:
\{MyPizza $\dot{\sqsubseteq}$ FishyMeatyPizza, MyFruttiDiMare $\dot{\sqsubseteq}$ NonVegetarianPizza\}, 
\{AnchoviesTopping $\dot{\sqsubseteq}$ FishTopping,  ParmaHamTopping $\dot{\sqsubseteq}$ MeatTopping\},
\{ParmaHamTopping $\dot{\sqsubseteq}$ FishTopping, AnchoviesTopping $\dot{\sqsubseteq}$ MeatTopping\},
\{MyPizza $\dot{\sqsubseteq}$ Fishy\-Meaty\-Pizza, AnchoviesTopping $\dot{\sqsubseteq}$ FishTopping\}, 
and \{MyPizza $\dot{\sqsubseteq}$ FishyMeaty\-Pizza, AnchoviesTopping $\dot{\sqsubseteq}$ MeatTopping\}.

A repairing action is a set of statements of the form $A$ $\dot{\sqsubseteq}$ $B$ with $A$ and $B$ named concepts.
We note that for acyclic terminologies, after adding such a statement, the resulting TBox is not an acyclic terminology anymore. If this is needed, then instead of adding $A$ $\dot{\sqsubseteq}$ $B$  the following should be done. If there is no definition for $A$ yet in the TBox, then add $A$ $\doteq$ $B$ $\sqcap$  $\overline{A}$ with $\overline{A}$ a new atomic concept. If there is already a definition for $A$ in the TBox, say $A$ $\doteq$ $C$ then change this definition to $A$ $\doteq$ $B$ $\sqcap$ $C$. For instance, to add the repairing action \{AnchoviesTopping $\dot{\sqsubseteq}$ FishTopping,  ParmaHamTopping $\dot{\sqsubseteq}$ MeatTopping\} to the acyclic terminology in Figure \ref{small-pizza-ontology-acyclic-terminology}, we have to change two definitions. 
AnchoviesTopping  $\doteq$ PizzaTopping  $\sqcap$ $\overline{AnchoviesTopping}$ becomes AnchoviesTopping  $\doteq$ PizzaTopping  $\sqcap$ $\overline{AnchoviesTopping}$ $\sqcap$ FishTopping, and ParmaHamTopping $\doteq$ PizzaTopping  $\sqcap$   $\overline{ParmaHamTopping}$ becomes ParmaHamTopping $\doteq$ PizzaTopping  $\sqcap$   $\overline{ParmaHamTopping}$ $\sqcap$  MeatTopping.

\subsection{Solution properties}

We prove that the algorithm is correct. In the following $A$ and $B$ are named concepts.

\begin{mylemma}\label{lemma-close}
If an ABox in the algorithm in Figure \ref{basic-algorithm} contains x:$A$ and x:$\lnot B$ for some individual x, then adding $A \sqsubseteq B$ to the ontology will close this ABox.
\end{mylemma}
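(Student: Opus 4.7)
The plan is to show that after augmenting the ontology with $A \dot{\sqsubseteq} B$, the tableau algorithm (with unfolding on demand, as described in the Preliminaries) is forced to derive $x:B$ from $x:A$, which then clashes with the existing $x:\neg B$.

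First, I would recall the encoding discussed just before the lemma: adding $A \dot{\sqsubseteq} B$ to the acyclic terminology either introduces a new definition $A \doteq B \sqcap \overline{A}$ (if $A$ had none) or replaces an existing $A \doteq C$ by $A \doteq B \sqcap C$. In either case, in the resulting terminology the right-hand side of the definition of $A$ is a conjunction with $B$ as one of its conjuncts.

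Next, I would trace what happens when the tableau is applied to the ABox containing $x:A$ under the new terminology. Because unfolding on demand replaces $x:A$ by $x:(B \sqcap D)$ (where $D$ is either $\overline{A}$ or the previous definition $C$), the $\sqcap$-rule from Figure \ref{transformation-rules} must then fire and add $x:B$ to the ABox. Together with the already present $x:\neg B$, this constitutes a clash on the named concept $B$, so the ABox closes. (If $A$ was already unfolded before the addition of the axiom, one may simply re-run unfolding on $x:A$ with the new definition; the argument is unchanged.)

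Honestly, there is not much of an obstacle here: the statement is essentially a direct consequence of the unfolding mechanism together with the definition of clash. The only point worth being careful about is the two-case treatment of the definition of $A$ and the observation that the added $B$ conjunct survives the $\sqcap$-rule so that a clash on the named concept $B$ is actually produced, rather than on some sub-concept of its unfolding. I would therefore keep the proof short, flagging these two cases explicitly.
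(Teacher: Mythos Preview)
Your proposal is correct and follows essentially the same approach as the paper's own proof: both handle the two cases for how $A \dot{\sqsubseteq} B$ is incorporated into the acyclic terminology, then observe that unfolding $x{:}A$ followed by the $\sqcap$-rule yields $x{:}B$, which clashes with $x{:}\neg B$. Your parenthetical remark about re-unfolding if $A$ had already been expanded is a small extra care not present in the paper, but it does not change the argument.
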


\begin{proof}
Adding  $A \sqsubseteq B$ to an acyclic terminology can be done in the following way. If there is no definition for $A$ yet in the TBox, then add $A$ $\doteq$ $B$ $\sqcap$  $\overline{A}$ with $\overline{A}$ a new atomic concept. If there is already a definition for $A$ in the TBox, say $A$ $\doteq$ $C$ then change this definition to $A$ $\doteq$ $B$ $\sqcap$ $C$. In both cases it will allow an unfolding step in the ABox for x:$A$ such that x:$B$ $\sqcap$  $\overline{A}$
or x:$B$ $\sqcap$ $C$ is added to the ABox. A further application of the $\sqcap$-rule will then also add x:$B$ to the ABox. This leads to a contradiction with x:$\lnot B$ and closes the ABox.

  \hspace*{11cm} $\clubsuit$
\end{proof}

\begin{mylemma}
Let G be the completion graph after running the tableaux algorithm with $A_i \sqcap \lnot B_i$ as input on KB.
Let G$_e$ be the completion graph when running the tableaux algorithm with $A_i \sqcap \lnot B_i$ as input on KB$_{e}$, the knowledge base for the extended ontology where (i) a statement $A$ $\doteq$ $B$ $\sqcap$  $\overline{A}$ is added, or (ii) a statement $A$ $\doteq$ $C$ is changed to  $A$ $\doteq$ $B$ $\sqcap$ $C$.
Then for every open leaf ABox ${\cal A}_e$ in G$_e$, there is a corresponding open leaf ABox ${\cal A}$ in G such that ${\cal A}$ $\subseteq$ ${\cal A}_e$.
\end{mylemma}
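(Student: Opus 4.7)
The plan is to set up a lock-step correspondence between the two tableau derivations, construct the ``corresponding'' leaf in $G$ by replaying the $\sqcup$-rule choices used to reach $\mathcal{A}_e$ in $G_e$, and then prove by induction that the ABox produced in $G$ is contained in $\mathcal{A}_e$. Openness of $\mathcal{A}$ then follows immediately from openness of $\mathcal{A}_e$, since any clash $\{x{:}C, x{:}\lnot C\} \subseteq \mathcal{A} \subseteq \mathcal{A}_e$ would already be a clash in $\mathcal{A}_e$.

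More concretely, I would first observe that $\mathrm{KB}$ and $\mathrm{KB}_e$ differ only in the treatment of the concept name $A$: every unfolding step applied to $x{:}A$ in $G_e$ adds the same material as in $G$ plus the extra assertion $x{:}B$ (and, after the $\sqcap$-rule, further assertions that decompose $B$). All other rules behave identically on identical triggers. I would then describe a simulation that, given the sequence of rule applications reaching $\mathcal{A}_e$ in $G_e$, builds a derivation in $G$ by (i) copying each rule application whose trigger already exists in $G$, making the same branching choice at every $\sqcup$-rule and re-using the same fresh names at every $\exists$-rule, and (ii) simply omitting those rule applications in $G_e$ whose trigger was produced solely by the additional unfolding of $A$ (i.e.\ descendants of the extra $x{:}B$). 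This yields a partial derivation in $G$ whose final ABox $\mathcal{A}'$ satisfies $\mathcal{A}' \subseteq \mathcal{A}_e$ by a straightforward induction on the length of the $G_e$-derivation.

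Next, I would extend $\mathcal{A}'$ to a saturated leaf $\mathcal{A}$ of $G$ by applying whatever rules are still enabled. The key point here is that any assertion added by such a continuation must already be present in $\mathcal{A}_e$: the trigger for the rule lies in $\mathcal{A}' \subseteq \mathcal{A}_e$, and, since $\mathcal{A}_e$ is saturated, the corresponding conclusion is in $\mathcal{A}_e$ as well (for the $\exists$-rule we may need to identify the fresh witness in $G$ with the witness already chosen by $G_e$, which is possible because $\mathcal{A}_e$ already contains an appropriate $r$-successor). Hence $\mathcal{A} \subseteq \mathcal{A}_e$, and $\mathcal{A}$ is an open leaf of $G$ as argued above.

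The main obstacle is bookkeeping rather than mathematics: one has to be careful with the $\exists$-rule, because it introduces fresh individuals whose names must be aligned across the two derivations, and with the additional branching that can be caused in $G_e$ by the extra $x{:}B$ when $B$ contains disjunctions — these extra branches only restrict which $\mathcal{A}_e$ we are starting from, and do not affect the construction of the $G$-side derivation. Once the naming convention (always pick, in $G$, the same fresh individual name that $G_e$ introduced for the corresponding $\exists$-rule application) is fixed, the inductive subset-invariant goes through cleanly and yields the lemma.
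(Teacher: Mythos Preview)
Your argument is correct and, at its core, is the same idea the paper uses: isolate, inside the saturated open leaf $\mathcal{A}_e$, exactly those assertions that can be produced by the tableau for the \emph{original} KB, and show this subset is itself a saturated, hence leaf, ABox of $G$; openness then follows from $\mathcal{A}\subseteq\mathcal{A}_e$. The difference is purely in presentation. The paper makes this explicit by building a dependency graph over the assertions of $\mathcal{A}_e$ (edges labelled by the rule or unfolding that produced each assertion) and then running a tagging procedure that, starting from $x{:}A_i\sqcap\lnot B_i$, propagates a $\sigma$-mark along those edges that correspond to rule applications already available in KB; the $\sigma$-marked assertions are then shown to form the desired open leaf $\mathcal{A}$ of $G$. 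Your replay-then-saturate formulation does exactly the same simulation without materialising the dependency graph, and is the more standard style for tableau-correspondence lemmas; the paper's version is more algorithmic and closer to something one could implement.

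One point you should state explicitly in the saturation phase: if a $\sqcup$-rule still has to be applied when extending $\mathcal{A}'$ to $\mathcal{A}$, you must choose the disjunct that is already present in $\mathcal{A}_e$ (such a disjunct exists because $\mathcal{A}_e$ is saturated). Your text says ``the corresponding conclusion is in $\mathcal{A}_e$'', which is only true for the $\sqcup$-rule once this choice is spelled out. With that clarification, the argument is complete.
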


\begin{proof}

We observe that the statements that are valid in the original ontology are still valid in the extended ontology. Also, all transformation rules that could be applied in the tableaux algorithm on the original ontology can also be applied on the extended ontology. 
Further, additionally, for the extended ontology, there may be an unfolding based on the extension (i.e. when x:$A$ appears in an ABox, then by unfolding x:$B$ (and in case (i) also x:$\overline{A}$) should be added), as well as additional applications of transformation rules based on x:$B$ and its consequences. 
Therefore, assume that ${\cal A}_e$ 
is an open leaf ABox in G$_e$. Then  ${\cal A}_e$ will contain statements that are a result of transformations that could have been peformed in the original ontology as well as statements that can only be derived based on x:$B$. 

In order to identify which statements in ${\cal A}_e$ belong to which of these two categories, we built a dependency graph {\cal DG} = (V, E) where the vertices represent statements from ${\cal A}_e$, and there is an edge from node $n_i$ related to statement $\tau_i$ to node $n_j$ related to statement $\tau_j$ if $\tau_j$ could be a direct result of applying some transformation rule or unfolding on $\tau_i$. We label the edges with $\sqcap$, $\sqcup$, $\exists_i$ (from a node related to x:$\exists$r.D to a node related to xry), $\exists_c$ (from a node related to x:$\exists$r.D to a node related to y:D), $\forall_i$ (from a node related to xry to a node related to y:C), $\forall_c$ (from a node related to x:$\forall$r.C to a node related to y:C) and $unfold$ depending on which transformation rule was applied.

Further, we tag the nodes in the dependency graph using $\sigma$ and $\theta$. When the tagging is finished, the statements related to the nodes tagged with $\sigma$ are a result of transformations that could have been performed in the original ontology. The algorithm for tagging nodes is given in Figure \ref{tagging-algorithm}.
Initially, the node related to x:$A_i \sqcap \lnot B_i$ is tagged with $\sigma$.
Further, for case (i) all nodes related to z:$\overline{A}$ and z:B $\sqcap$ $\overline{A}$ for some individual z  are tagged with $\theta$ and this tag can never be changed\footnote{$\overline{A}$ does not occur in the original ontology and thus these statements are not included in any ABox related to the original ontology.}; 
All other nodes are initially tagged with $\theta$ and collected in the set $\theta$-nodes.
We then traverse the dependency graph and change $\theta$ tags into $\sigma$ tags according to a number of rules. Essentially, the tagging simulates the construction of a completion graph (step 4) by first using  $\sqcap$, $\forall$, $\exists$ and unfolding rules until no more such rules are applicable (essentially an ABox in the completion graph). This is achieved in step 4.1. by applying the tagging rules 1-5 in Figure \ref{tagging-rules}.  For case (ii) we need to deal with a special case when unfolding x:A (step 4.1.2). When no more such rules can be applied, but a $\sqcup$-rule could be applied, a choice is made and the procedure is repeated (essentially a child ABox in the completion graph is created and then if needed, further descendant ABoxes). This is achieved by using tagging rules 6 and 7 in Figure \ref{tagging-rules} in step 4.4. Step 4 finishes when all nodes have $\sigma$ tags or no more tagging rules can be applied to nodes with $\theta$ tags.

\begin{figure}[tb]
\begin{center}
\scriptsize
\begin{tabular}{| l |}
\hline
A node $n$ tagged with $\theta$ changes its tag to $\sigma$ if: \\
1. node $n$ has an incoming $unfold$ edge from a $\sigma$-tagged node $p$\\
2. node $n$ has an incoming $\sqcap$ edge from a $\sigma$-tagged node $p$\\
3. node $n$ has an incoming $\exists_i$ edge from a $\sigma$-tagged node $p$\\
4. node $n$ has an incoming $\exists_c$ edge from a $\sigma$-tagged node $p$\\
5. node $n$ has an incoming $\forall_i$ edge from a $\sigma$-tagged node $p_i$ \\
\hspace*{0.5cm} and an incoming $\forall_c$ edge from a $\sigma$-tagged node $p_c$\\
6.  node $n$ has an incoming $\sqcup$ edge from a $\sigma$-tagged node $p$ \\  
\hspace*{0.5cm} and all other nodes with an incoming $\sqcup$ edge from $p$ are tagged with $\theta$\\
7.  node $n$ has an incoming $\sqcup$ edge from a $\sigma$-tagged node $p$ \\  
\hspace*{0.5cm} and there are no other nodes with an incoming $\sqcup$ edge from $p$\\
  \hline
\end{tabular}
\end{center}
\caption{Tagging rules.}
\label{tagging-rules}
\end{figure}

\begin{figure}[tb]
\begin{center}
\scriptsize
\begin{tabular}{| l |}
\hline
\noindent\textbf{\textit{Input}}:
The initial dependency graph $DG$.\\
\textit{\textbf{Output}}: Tagged dependency graph. \\
\textbf{\textit{Algorithm}} \\
1. Tag the node related to x:$A_i \sqcap \lnot B_i$ with $\sigma$.\\
1(i). Only for case (i): Tag all nodes related to z:$\overline{A}$ and z:B $\sqcap$ $\overline{A}$ for some indivdual z with $\theta$; \\ 
2. Tag all remaining nodes with $\theta$ and add them to the set $\theta$-nodes;  \\
3. $Visited$ := $\emptyset$; \\
4. {\bf While} $\theta$-nodes $\neq$ $\emptyset$ and $\theta$-nodes $\neq$ $Visited$, 
{\bf do}:\\
\hspace*{0.35cm} 4.1 {\bf While} $\theta$-nodes $\neq$ $\emptyset$ and $\theta$-nodes $\neq$ $Visited$, {\bf do}:\\
\hspace*{0.35cm}\hspace*{0.35cm} 4.1.1 Select a node  $n$ $\in$ $\theta$-nodes $\setminus$ $Visited$ and add $n$ to $Visited$; \\
\hspace*{0.35cm}\hspace*{0.35cm} 4.1.2 {\bf If} $n$ is related to x:B$\sqcap$C for some x \\ 
\hspace*{0.35cm}\hspace*{0.35cm}\hspace*{0.35cm} and rule 1 is applicable for unfolding a $\sigma$-tagged node related to x:A (only case (ii)\\
\hspace*{0.35cm}\hspace*{0.35cm}\hspace*{0.35cm} {\bf then} \\
\hspace*{0.35cm}\hspace*{0.35cm}\hspace*{0.35cm} 4.1.2.1 {\bf If} there is no node related to x:C \\
\hspace*{0.35cm}\hspace*{0.35cm}\hspace*{0.35cm}\hspace*{0.35cm} {\bf then} \\
\hspace*{0.35cm}\hspace*{0.35cm}\hspace*{0.35cm}\hspace*{0.35cm}
4.1.2.1.1 Add a new node related to x:C and connect it with outgoing $\sqcap$-edges to 
\\ \hspace*{0.35cm}\hspace*{0.35cm}\hspace*{0.35cm}\hspace*{0.35cm} all nodes that are connected with incoming $\sqcap$-edges to the node related to x:B$\sqcap$C, 
\\ \hspace*{0.35cm}\hspace*{0.35cm}\hspace*{0.35cm}\hspace*{0.35cm} except the node x:B;\\
\hspace*{0.35cm}\hspace*{0.35cm}\hspace*{0.35cm} 4.1.2.2
Tag the node related to x:C with $\sigma$;\\
\hspace*{0.35cm}\hspace*{0.35cm}\hspace*{0.35cm} {\bf else} \\
\hspace*{0.35cm}\hspace*{0.35cm}\hspace*{0.35cm} 4.1.2.1' {\bf If} any of the tagging rules 1-5 in Figure \ref{tagging-algorithm} is applicable, \\
\hspace*{0.35cm}\hspace*{0.35cm}\hspace*{0.35cm}\hspace*{0.35cm}{\bf then} \\
\hspace*{0.35cm}\hspace*{0.35cm}\hspace*{0.35cm}\hspace*{0.35cm} 4.1.2.1'.1 Change the tag of $n$ to $\sigma$; \\
\hspace*{0.35cm}\hspace*{0.35cm}\hspace*{0.35cm}\hspace*{0.35cm} 4.1.2.1'.2 Remove $n$ from $\theta$-nodes;\\
\hspace*{0.35cm}\hspace*{0.35cm}\hspace*{0.35cm}\hspace*{0.35cm} 4.1.2.1'.3 $Visited$ := $\emptyset$;\\ 


\hspace*{0.35cm} 4.2 $Visited$ := $\emptyset$;\\
\hspace*{0.35cm} 4.3 $Still$-$to$-$visit$ := true; \\
\hspace*{0.35cm} 4.4 {\bf While} $\theta$-nodes $\neq$ $\emptyset$ and $\theta$-nodes $\neq$ $Visited$ and $Still$-$to$-$visit$ = true, {\bf do}: \\
\hspace*{0.35cm}\hspace*{0.35cm} 4.4.1 Select a node  $n$ $\in$ $\theta$-nodes $\setminus$ $Visited$ and add $n$ to $Visited$; \\
\hspace*{0.35cm}\hspace*{0.35cm} 4.4.2 {\bf If} tagging rule 6 or 7 in Figure \ref{tagging-algorithm} is applicable, \\
\hspace*{0.35cm}\hspace*{0.35cm}\hspace*{0.35cm} {\bf then} \\
\hspace*{0.35cm}\hspace*{0.35cm}\hspace*{0.35cm}  4.4.2.1 Change the tag of $n$ to $\sigma$; \\
\hspace*{0.35cm}\hspace*{0.35cm}\hspace*{0.35cm}  4.4.2.2 Remove $n$ from $\theta$-nodes;\\
\hspace*{0.35cm}\hspace*{0.35cm}\hspace*{0.35cm}  4.4.2.3 $Visited$ := $\emptyset$;\\ 
\hspace*{0.35cm}\hspace*{0.35cm}\hspace*{0.35cm}  4.4.2.3 $Still$-$to$-$visit$ := false; \\ 

5. {\bf Return} the tagged graph;\\

  \hline
\end{tabular}
\end{center}
\caption{Algorithm for tagging the dependency graph.}
\label{tagging-algorithm}
\end{figure}

We now show that the set of statements related to $\sigma$-tagged nodes is an open leaf ABox ${\cal A}$ in G (with possible renaming of individuals). It is clear that ${\cal A}$ $\subseteq$ ${\cal A}_e$.
Further, ${\cal A}$ contains statements that can be derived by running the tableaux algorithm on the original ontology with $A_i \sqcap \lnot B_i$ as input. (Without loss of generality, individuals may be renamed.) The dependency graph shows which transformation rules and unfoldings can be performed.
${\cal A}$ is also a leaf ABox in the completion graph obtained by running  the tableaux algorithm on the original ontology with $A_i \sqcap \lnot B_i$ as input. If ${\cal A}$ were not a leaf ABox in that completion graph, then there would be other statements that could be added through the application of the transformation rules or unfolding. However, in that case, these transformation rules or unfoldings could also be applied to ${\cal A}_e$ and these statements could be added to ${\cal A}_e$, which would contradict the fact that ${\cal A}_e$ is a leaf ABox.
Finally, as  ${\cal A}$ $\subseteq$ ${\cal A}_e$ and as  ${\cal A}_e$ does not contain a contradiction (as it is an open leaf ABox in G$_e$), also ${\cal A}$ does not contain a contradiction.

 \hspace*{11cm} $\clubsuit$
\end{proof}

Consider the example ontology in Figure \ref{example-ontology-original} and assume we run the tableaux algorithm with input  x:A $\sqcap$ $\lnot$F. The completion graph is given in Figure \ref{Fig-G}. 
Figure \ref{Fig-Ge} shows a completion graph for running the tableaux algorithm with input  A $\sqcap$ $\lnot$F on the ontology that extends the original ontology with the axiom A $\doteq$ B $\sqcap$ $\overline{A}$ (Figure \ref{example-ontology-extended}).
The completion graph G$_e$  contains two open leaf ABoxes, ABox 1.2.1 and ABox 1.2.2. We observe that ABox 1.2.1 in Figure \ref{Fig-Ge} contains all literals from ABox 1.1 in Figure \ref{Fig-G} and ABox 1.2.2 in Figure \ref{Fig-Ge} contains all literals from ABox 1.2 in Figure \ref{Fig-G}.

Figure \ref{Fig-dependency} shows an initial dependency graph for ABox 1.2.1. In this example all but x:A $\sqcap \lnot$F can be the direct result of the application of a transformation rule or unfolding of only one other statement. In general this is not case, though. Initially, the node related to (1) is tagged as $\sigma$. All other nodes are tagged with $\theta$, and the tag of the nodes related to (3) and (5) cannot be changed. In the first iteration the tags of the nodes related to (2), (9) and (10) will be changed to $\sigma$ using the tagging rules 1 and 2. Further, the tag for the node related to (14) is changed to  $\sigma$ using tagging rule 7. Then, there will be no more changes in the tagging and the statements related to the nodes with $\sigma$ tags in Figure \ref{Fig-dependency-tagged} constitute the statements in ABox 1.1. in Figure \ref{Fig-G}.

\begin{figure}[tb]
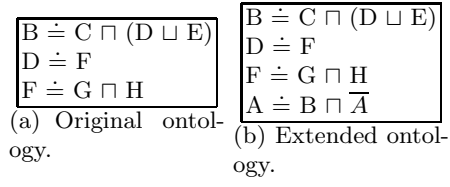

    \centering
    \subfigure[Original ontology.]
{
\begin{tabular}{| l |}
\hline
B  $\doteq$ C $\sqcap$ (D $\sqcup$ E) \\
D  $\doteq$ F \\
F  $\doteq$ G $\sqcap$ H \\
\hline
\end{tabular}
\label{example-ontology-original}
}
    \subfigure[Extended ontology.]
{
\begin{tabular}{| l |}
\hline
B  $\doteq$ C $\sqcap$ (D $\sqcup$ E) \\
D $\doteq$ F \\
F  $\doteq$ G $\sqcap$ H \\
A  $\doteq$ B $\sqcap$ $\overline{A}$ \\
\hline
\end{tabular}
\label{example-ontology-extended}
}
\label{example-ontology}
\caption{Ontology - original and extended.}
\end{figure}

\begin{figure}[tb]
    \centering
    \subfigure[Completion graph after running the tableaux algorithm with input A $\sqcap$ $\lnot$F on the original ontology in Figure \ref{example-ontology-original}.]
    {
        \includegraphics[width=0.47\textwidth]{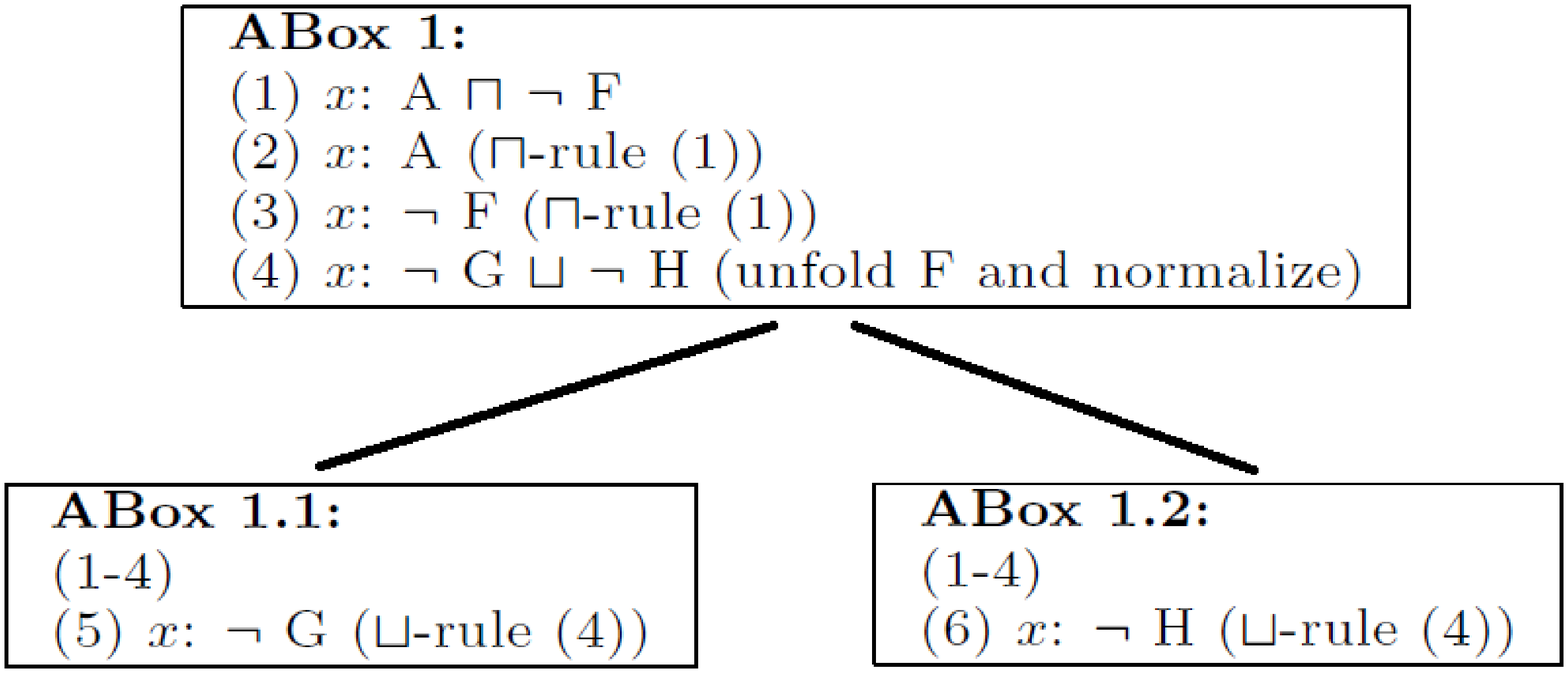}
        \label{Fig-G}
    }
    \subfigure[Completion graph after running the tableaux algorithm with input A $\sqcap$ $\lnot$F on the extended ontology in Figure \ref{example-ontology-extended}.]
    {
        \includegraphics[width=0.47\textwidth]{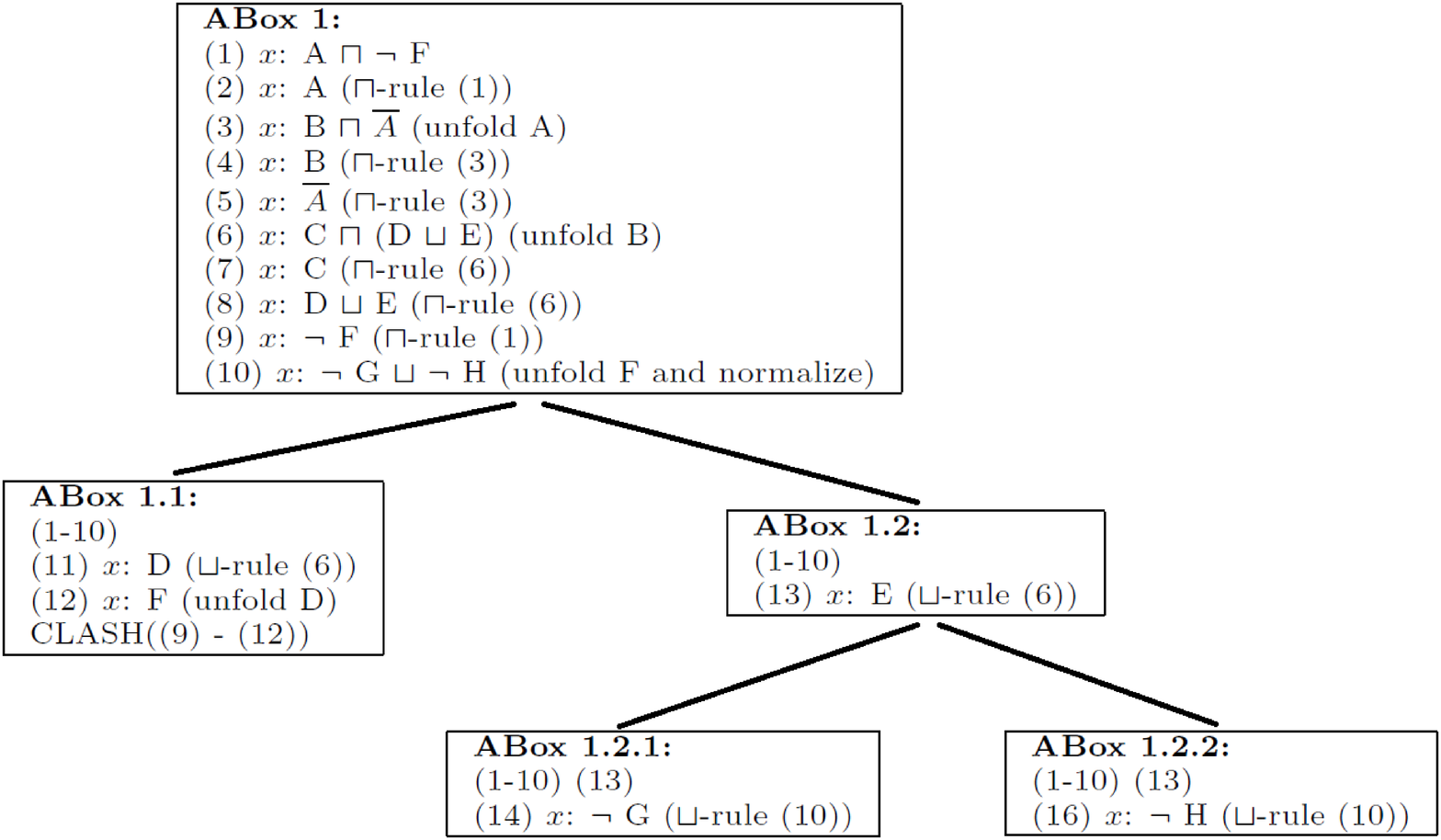}
        \label{Fig-Ge}
    }
    \label{fig:completion-graphs}
\caption{Completion graphs}.
\end{figure}

\begin{figure}[tb]
    \centering
    \subfigure[Initial dependency graph for ABox 1.2.1. in Figure \ref{Fig-Ge}.]
    {
        \includegraphics[width=0.47\textwidth]{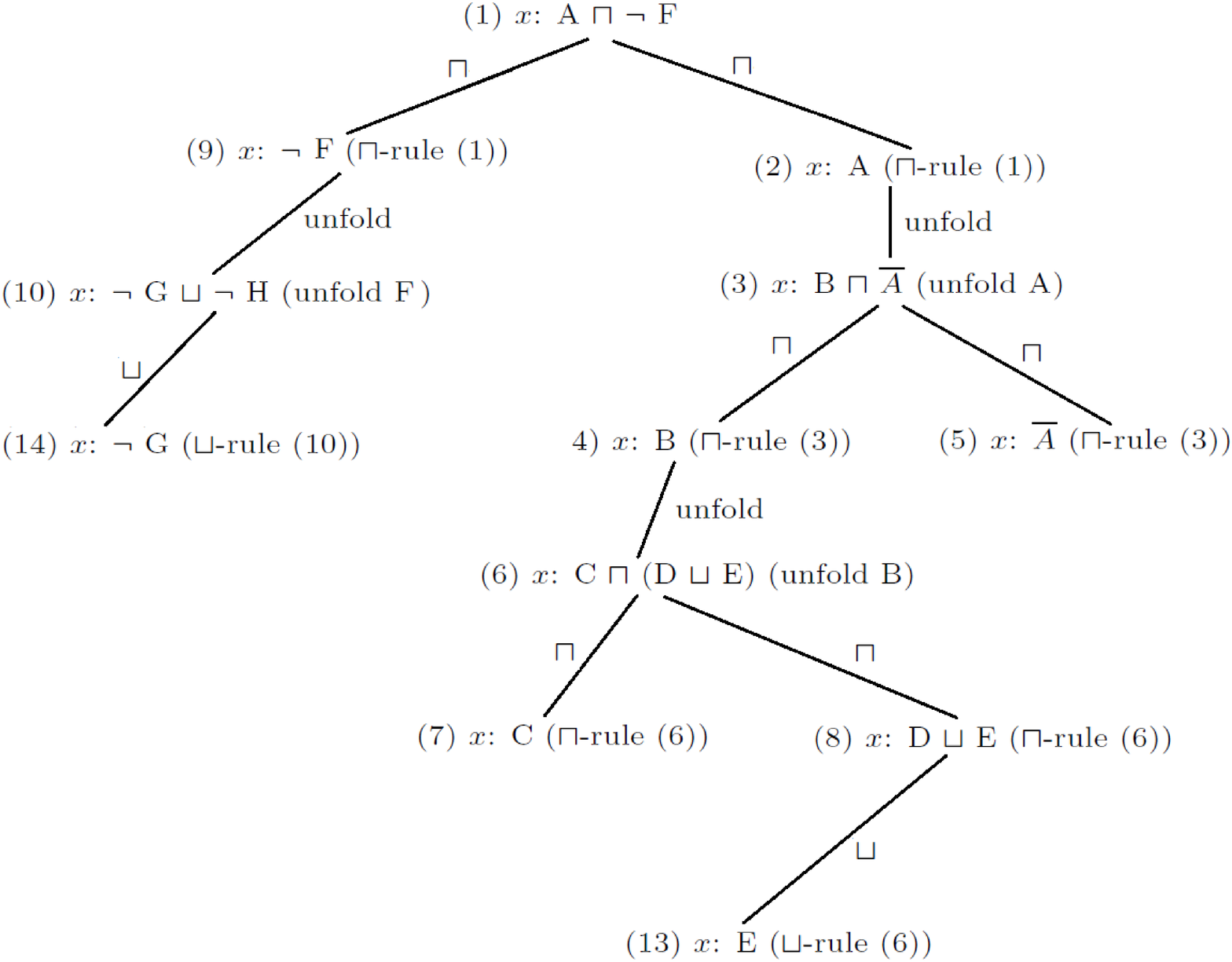}
        \label{Fig-dependency}
    }
    \subfigure[Tagged dependency graph for ABox 1.2.1. in Figure \ref{Fig-Ge}.]
    {
        \includegraphics[width=0.47\textwidth]{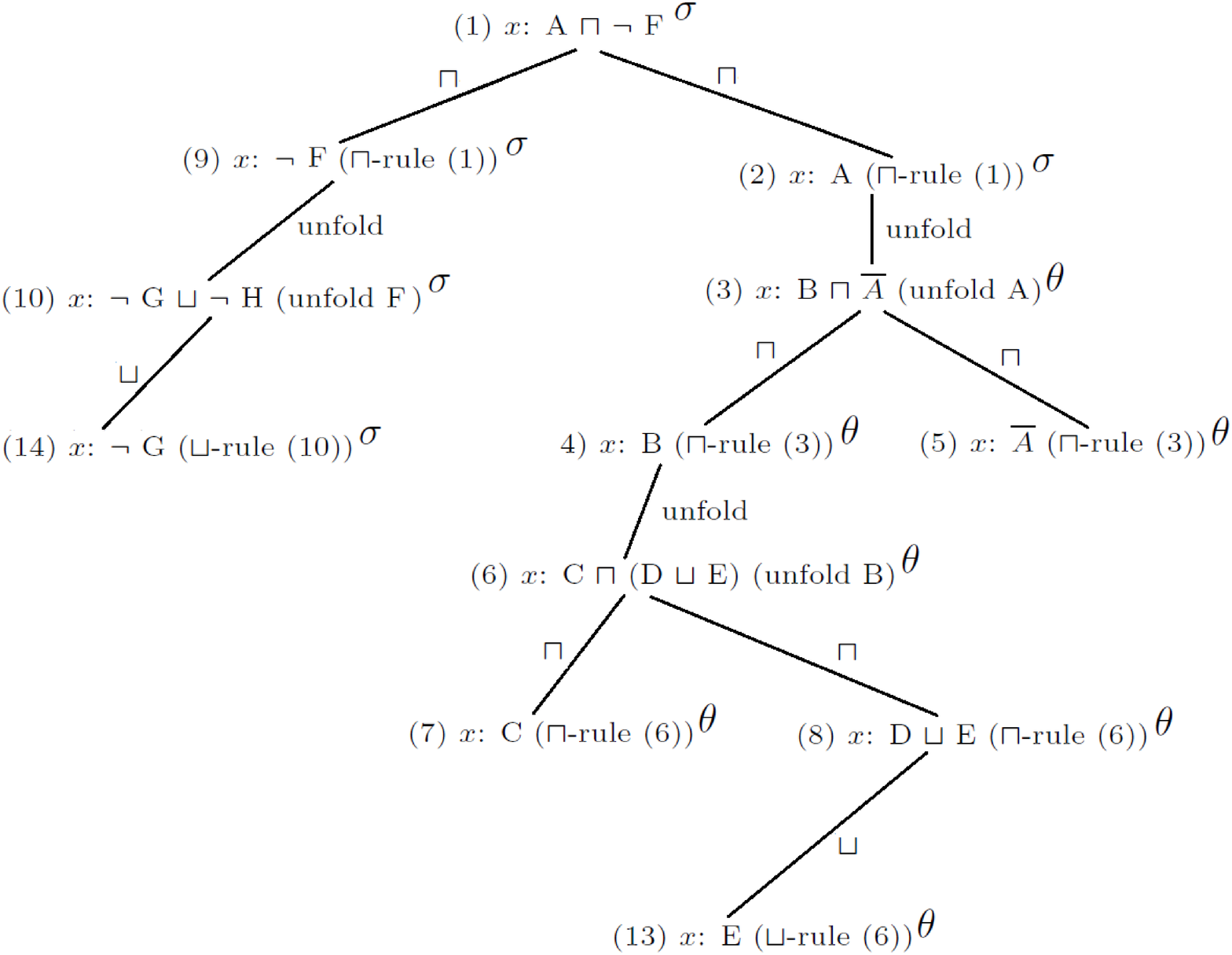}
        \label{Fig-dependency-tagged}
    }
    \label{fig:dependency-graphs}
\caption{Dependency graphs.}
\end{figure}

The previous lemma also holds when adding {\it sets} of axioms of the form (i) $A$ $\doteq$ $B$ $\sqcap$ $\overline{A}$ or changing a sets of axioms (ii) of the form $A$ $\doteq$ $C$ to axioms of the form $A$ $\doteq$ $B$ $\sqcap$ $C$.

\begin{mylemma}
Let G be the completion graph after running the tableaux algorithm with $A_i \sqcap \lnot B_i$ as input on KB.
Let G$_e$ be the completion graph when running the tableaux algorithm with $A_i \sqcap \lnot B_i$ as input on KB$_{e}$, the knowledge base for the extended ontology where (i)  statements of the form $A$ $\doteq$ $B$ $\sqcap$  $\overline{A}$ are added, or (ii) statements of the form $A$ $\doteq$ $C$ are changed to  $A$ $\doteq$ $B$ $\sqcap$ $C$.
Then for every open leaf ABox ${\cal A}_e$ in G$_e$, there is a corresponding open leaf ABox ${\cal A}$ in G such that ${\cal A}$ $\subseteq$ ${\cal A}_e$.

\end{mylemma}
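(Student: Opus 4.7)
The plan is to prove this generalization by induction on the total number of axiom additions/modifications, using the previous single-axiom lemma as the inductive step. Let $k$ denote the number of axioms that are added (form (i)) or modified (form (ii)) in passing from KB to KB$_e$. Fix an enumeration $\alpha_1, \ldots, \alpha_k$ of these changes and define a chain of knowledge bases $\text{KB} = \text{KB}_0, \text{KB}_1, \ldots, \text{KB}_k = \text{KB}_e$, where $\text{KB}_j$ is obtained from $\text{KB}_{j-1}$ by performing the single change $\alpha_j$. Let $G_j$ be the completion graph resulting from running the tableaux algorithm with $A_i \sqcap \lnot B_i$ as input on $\text{KB}_j$; in particular $G_0 = G$ and $G_k = G_e$.

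First I would verify that every intermediate $\text{KB}_j$ is still a well-formed acyclic terminology, so that the previous lemma is legitimately applicable at each stage. For case (i) changes this is immediate, since the auxiliary concept $\overline{A}$ is fresh and $A$ had no previous definition. For case (ii) changes one must observe that inserting $B$ into the right-hand side of $A$'s definition introduces no cycle; this is guaranteed because $\text{KB}_e$ itself is a terminology and the order of applying the changes is immaterial.

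Next I would apply the previous lemma iteratively, walking the index $j$ downward from $k$ to $1$. Starting from an arbitrary open leaf ABox ${\cal A}_e = {\cal A}_k$ of $G_k$, one application of the previous lemma (with $\text{KB}_{k-1}$ playing the role of ``original'' and $\text{KB}_k$ the role of ``extended'') yields an open leaf ABox ${\cal A}_{k-1}$ of $G_{k-1}$ with ${\cal A}_{k-1} \subseteq {\cal A}_k$. Iterating gives a chain of open leaf ABoxes ${\cal A}_0, {\cal A}_1, \ldots, {\cal A}_k$ in $G_0, G_1, \ldots, G_k$ respectively, with ${\cal A}_{j-1} \subseteq {\cal A}_j$ for every $j$. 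Transitivity of $\subseteq$ then produces ${\cal A}_0 \subseteq {\cal A}_e$, and ${\cal A}_0$ is an open leaf ABox of $G_0 = G$, which is exactly what is required.

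The main obstacle I anticipate is justifying that the previous lemma applies at each intermediate step, since that lemma was stated and proved with a single, unspecified knowledge base KB as the starting point. Inspecting its proof (the construction of the dependency graph and the $\sigma/\theta$ tagging procedure), one sees that the argument only uses the facts that the starting knowledge base is an acyclic terminology and that the extension has the prescribed shape; no other structural property of KB is exploited. Consequently the lemma remains valid with $\text{KB}_{j-1}$ as ``original'' and $\text{KB}_j$ as ``extended'' at each step of the induction, and the chain argument above goes through.
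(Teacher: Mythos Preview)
Your proposal is correct and follows essentially the same approach as the paper: both proofs enumerate the changes, form a chain of intermediate knowledge bases, apply the single-change lemma at each step to obtain a descending chain of open leaf ABoxes, and conclude by transitivity of $\subseteq$. Your version is, if anything, slightly more careful in explicitly checking that each intermediate $\text{KB}_j$ remains an acyclic terminology so that the previous lemma legitimately applies.
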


\begin{proof}  

Let $\Theta$ = \{$\theta_1$, $\dots$,$\theta_n$\} be the changes to the original ontology, i.e.  $\theta_j$ represents either an addition of an axiom of the form $A$ $\doteq$ $B$ $\sqcap$ $\overline{A}$ or a change of an axiom of the form $A$ $\doteq$ $C$ to an axiom of the form $A$ $\doteq$ $B$ $\sqcap$ $C$.

For any possible order $\theta_1^\prime$, $\dots$ , $\theta_n^\prime$ of the elements in $\Theta$, let $\Theta^\prime_k$ = \{$\theta_1^\prime$, $\dots$,$\theta_k^\prime$\} $\subset$ $\Theta$  be the set of the first k changes according to the chosen order. Let KB$_e^{\Theta_k^\prime}$ be the knowledge base for the ontology that is constructed by applying the changes in $\Theta_k^\prime$ to the original ontology. We note that for k = n there is only one possible set of changes, i.e. 
$\Theta^\prime_n$ = $\Theta$ and  KB$_e^{\Theta_n}$ = KB$_{e}$.

According to the previous lemma, we then know that for any open leaf ABox in the completion graph obtained by running the tableaux algorithm with $A_i \sqcap \lnot B_i$ as input on  KB$_e^{\Theta_{k+1}^\prime}$ with k $<$ n, there is an open leaf ABox in the completion graph obtained by running the tableaux algorithm with $A_i \sqcap \lnot B_i$ as input on KB$_e^{\Theta_k^\prime}$, such that the latter is contained in the former. 

Therefore, for every open leaf ABox ${\cal A}_{e}$ in G$_e$, there is a chain of open leaf ABoxes ${\cal A}_{n-1}$, $\dots$, ${\cal A}_1$, ${\cal A}$, such that  ${\cal A}_k$ is an open leaf ABox in the completion graph obtained by running the tableaux algorithm with $A_i \sqcap \lnot B_i$ as input on KB$_e^{\Theta_k^\prime}$,  ${\cal A}$ is an open leaf ABox in G, and ${\cal A}$ $\subseteq$ ${\cal A}_1$, ${\cal A}_1$ $\subseteq$ ${\cal A}_2$, $\dots$ ${\cal A}_{n-1}$ $\subseteq$
 ${\cal A}_{e}$.

  \hspace*{11cm} $\clubsuit$
\end{proof}

\begin{mylemma}
Any element of $Rep(A_i,B_i)$ repairs the missing is-a relation $A_i$ $\dot{\sqsubseteq}$ $B_i$.
\end{mylemma}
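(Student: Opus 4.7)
The plan is to show that after adding the axioms of a repairing action $ra \in Rep(A_i, B_i)$ to $KB$, the tableaux algorithm run on $A_i \sqcap \neg B_i$ against the extended knowledge base produces a completion graph in which every leaf ABox is closed; this establishes $A_i \sqcap \neg B_i$ is unsatisfiable w.r.t.\ the extended KB, hence $KB \cup ra \models A_i \dot{\sqsubseteq} B_i$.

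First I would recall how $ra$ was constructed in step 1.5.1: for each open leaf ABox $\mathcal{A}$ of the original completion graph $G$, one axiom $P_\mathcal{A} \dot{\sqsubseteq} N_\mathcal{A}$ is selected from $R_\mathcal{A}$, where by construction in steps 1.3.2.1--1.3.2.3 there exists an individual $x_\mathcal{A}$ with $x_\mathcal{A}{:}P_\mathcal{A} \in \mathcal{A}$ and $x_\mathcal{A}{:}\neg N_\mathcal{A} \in \mathcal{A}$. Thus $ra = \{P_\mathcal{A} \dot{\sqsubseteq} N_\mathcal{A} \mid \mathcal{A}$ open leaf of $G\}$.

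Next I would apply Lemma~3 to the extension corresponding to $ra$ (viewed as a set of changes of form (i) or (ii) in the lemma's statement). Let $G_e$ be the completion graph obtained from the extended $KB_e = KB \cup ra$ with input $A_i \sqcap \neg B_i$. Pick any leaf ABox $\mathcal{A}_e$ in $G_e$; suppose for contradiction it is open. By Lemma~3 there is a corresponding open leaf ABox $\mathcal{A}$ in $G$ with $\mathcal{A} \subseteq \mathcal{A}_e$. Then $\mathcal{A}_e$ inherits both $x_\mathcal{A}{:}P_\mathcal{A}$ and $x_\mathcal{A}{:}\neg N_\mathcal{A}$ from $\mathcal{A}$. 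Since $ra$ contains $P_\mathcal{A} \dot{\sqsubseteq} N_\mathcal{A}$, Lemma~1 applied within $KB_e$ yields that this axiom forces $\mathcal{A}_e$ to contain $x_\mathcal{A}{:}N_\mathcal{A}$ via unfolding and the $\sqcap$-rule, producing a clash with $x_\mathcal{A}{:}\neg N_\mathcal{A}$. This contradicts openness of $\mathcal{A}_e$, so every leaf of $G_e$ is closed and entailment follows.

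I expect the main subtlety to lie in justifying that the ``closing'' argument from Lemma~1 actually triggers inside the completion procedure that produced $\mathcal{A}_e$: Lemma~1 says that adding the axiom closes an ABox that already contains the trigger statements, but here $\mathcal{A}_e$ is an already-completed leaf, so one must be careful that the additional unfolding/$\sqcap$ steps induced by $P_\mathcal{A} \dot{\sqsubseteq} N_\mathcal{A}$ were necessarily performed during the construction of $G_e$ (otherwise $\mathcal{A}_e$ would not be a leaf). This is easily argued from the exhaustiveness of rule application in the tableaux algorithm (no rule is applicable to a leaf), combined with the fact that the definition of $A$ in $KB_e$ now contains $B$ as a conjunct, so the unfolding rule for $x_\mathcal{A}{:}P_\mathcal{A}$ must already have fired during the construction of $\mathcal{A}_e$. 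Once this point is made precise, the proof is a short chain of Lemma~1 $\circ$ Lemma~3 applied to the specific axioms chosen in $ra$.
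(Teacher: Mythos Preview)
Your proposal is correct and follows essentially the same route as the paper: assume an open leaf $\mathcal{A}_e$ in $G_e$, use Lemma~3 to obtain an open leaf $\mathcal{A}\subseteq\mathcal{A}_e$ in $G$, observe that the axiom chosen for $\mathcal{A}$ is in $ra$, and invoke Lemma~1 to close $\mathcal{A}_e$, yielding a contradiction. Your final paragraph about why the unfolding induced by $P_\mathcal{A}\dot{\sqsubseteq}N_\mathcal{A}$ must already have fired in the construction of $\mathcal{A}_e$ (since no rule applies to a leaf) is a worthwhile clarification that the paper leaves implicit.
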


\begin{proof}
Let \{$P_1$ $\dot{\sqsubseteq}$ $N_1$, $\dots$, $P_m$ $\dot{\sqsubseteq}$ $N_m$\} $\in$ $Rep(A_i,B_i)$. Let KB be the knowledge base of the original ontology. Then, we want to prove that  $A_i$ $\dot{\sqsubseteq}$ $B_i$ can be derived from the knowledge base of the ontology extended with \{$P_1$ $\dot{\sqsubseteq}$ $N_1$, $\dots$, $P_m$ $\dot{\sqsubseteq}$ $N_m$\}.

Adding \{$P_1$ $\dot{\sqsubseteq}$ $N_1$, $\dots$, $P_m$ $\dot{\sqsubseteq}$ $N_m$\} to an ontology means adding statements of the form 
$P_j$ $\doteq$ $N_j$ $\sqcap$ $\overline{P_j}$ with $\overline{P_j}$ a new atomic concept or changing statements of the form $P_j$ $\doteq$ $C$ to $P_j$ $\doteq$ $N_j$ $\sqcap$ $C$. Assume G and G$_e$ are the completion graphs  obtained by running the tableaux algorithm with $A_i \sqcap \lnot B_i$ as input on the knowledge bases of the original (with added missing is-a relations except $A_i \dot{\sqsubseteq} B_i$) and the extended ontology respectively.

We now prove that every leaf ABox in G$_e$ is closed.
Let us assume that there exists an open ABox ${\cal A}_e$ in G$_e$. According to the previous lemma, we know then that for ${\cal A}_e$ there is a corresponding leaf ABox ${\cal A}$ in G such that ${\cal A}$ $\subseteq$ ${\cal A}_e$. 
Further, there is at least one individual $x$ and named concepts $P$ and $N$ such that $x$:$P$ and $x$:$\neg$ $N$ are statements in the ABox ${\cal A}$\footnote{If the ABox is open then there must be at least one such situation. For instance, we know that $x$:$A_i$ and $x$:$\neg$ $B_i$ are in the ABox ${\cal A}$.}. $R_{\mathcal{A}}$ collects all possible is-a relations between such pairs $P$ and $N$. In the algorithm one such pair is chosen to be added to the ontology. Therefore, there is an individual $x$ and named concepts $P$ and $N$ such that $x$:$P$ and $x$:$\neg$ $N$ are statements in ${\cal A}_e$, and such that $P$ $\dot{\sqsubseteq}$ $N$ was added to the ontology. By lemma \ref{lemma-close} ${\cal A}_e$ is then closed which contradicts with our assumption.
This shows that all leaf ABoxes are closed and thus $A_i$ $\dot{\sqsubseteq}$ $B_i$ can be derived from the knowledge base of the ontology extended with \{$P_1$ $\dot{\sqsubseteq}$ $N_1$, $\dots$, $P_m$ $\dot{\sqsubseteq}$ $N_m$\}.

\hspace*{11cm} $\clubsuit$
\end{proof}

\begin{mytheorem}
Any element of $Rep(M)$ repairs the missing is-a relations in $M$.
\end{mytheorem}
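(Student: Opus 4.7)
The plan is to argue by case analysis on how an element of $Rep(M)$ can arise in the algorithm (Figure \ref{basic-algorithm}, steps 2--5), reducing everything to the preceding lemma (which handles each missing is-a relation individually) plus monotonicity of entailment in $\mathcal{ALC}$.

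First I would note that by construction $Rep(M)$ is built in two ways: step 2 inserts $M$ itself, and step 3 inserts every union $\bigcup_i r_i$ obtained by picking one $r_i \in Rep(A_i,B_i)$ per missing is-a relation $A_i \dot{\sqsubseteq} B_i \in M$. Steps 4 and 5 only \emph{delete} elements (those subsumed by smaller solutions and those introducing incoherence), so any element surviving in $Rep(M)$ must have originated in step 2 or step 3. For the trivial case $\{M\}$, the ontology extended with $M$ obviously entails every $A_i \dot{\sqsubseteq} B_i$, so that case is immediate. The main work is thus to handle elements produced by step 3.

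For the step-3 case, fix an element $rp = \bigcup_i r_i$ with $r_i \in Rep(A_i,B_i)$, and fix any missing is-a relation $A_k \dot{\sqsubseteq} B_k \in M$. By the preceding lemma, the knowledge base $KB \cup r_k$ already entails $A_k \dot{\sqsubseteq} B_k$. Since $r_k \subseteq rp$, we have $KB \cup rp \supseteq KB \cup r_k$, and monotonicity of $\mathcal{ALC}$ entailment gives $KB \cup rp \models A_k \dot{\sqsubseteq} B_k$. As $k$ was arbitrary, $rp$ repairs all missing is-a relations in $M$. The removal of redundant or incoherence-introducing solutions in steps 4--5 does not modify the surviving sets, so the conclusion transfers to every element that remains in $Rep(M)$.

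The only subtlety I expect is a bookkeeping one: in the previous lemma the hypothesis that $r_k$ repairs $A_k \dot{\sqsubseteq} B_k$ was phrased as running the tableaux algorithm with $A_k \sqcap \lnot B_k$ on the ontology \emph{extended with the other missing is-a relations}. I would make explicit that the same closure argument still applies when we additionally add the axioms from the other $r_j$'s, because those axioms only supply further possible unfoldings and transformation-rule applications, so every leaf ABox that was closed before remains closed (no previously applicable transformation is lost). This is really the same monotonicity observation used above, but stated at the level of completion graphs rather than at the level of the entailment relation, and it is the only place where one must be careful not to invoke the lemma outside its stated scope.
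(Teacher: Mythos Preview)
Your proposal is correct and follows essentially the same approach as the paper: invoke the preceding lemma for each $Rep(A_i,B_i)$ and use monotonicity to conclude that the union repairs all missing is-a relations. The paper's own proof is terser (it omits the explicit case for $M$ itself and the bookkeeping remark about the lemma's hypothesis), but the underlying argument is identical.
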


\begin{proof}

The previous lemma guarantees that adding any element from $Rep(A_i,B_i)$ to KB allows us to derive $A_i$ $\dot{\sqsubseteq}$ $B_i$.
Therefore, adding one element from each $Rep(A_i,B_i)$ to KB guarantees that each missing is-a relation  $A_i$ $\dot{\sqsubseteq}$ $B_i$ is derivable.

\hspace*{11cm} $\clubsuit$
\end{proof}

The proposed algorithm returns minimal solutions for every missing is-relation. In our setting a solution is minimal if it contains only necessary information for repairing a missing is-a relation. 

\begin{mydef}\label{def-min}
Let KB be a knowledge base representing ontology O and $A$ $\dot{\sqsubseteq}$ $B$ a missing is-a relation. A repairing action $\{ C_1$ $\dot{\sqsubseteq}$ $D_1, \dots, C_n$ $\dot{\sqsubseteq}$ $D_n\}$ is said to be minimal if it holds that $KB \cup \{ C_1$ $\dot{\sqsubseteq}$ $D_1, \dots, C_n$ $\dot{\sqsubseteq}$ $D_n\} \models A$ $\dot{\sqsubseteq}$ $B$ and 
there is no $\{E_1$ $\dot{\sqsubseteq}$ $F_1, \dots, E_m$ $\dot{\sqsubseteq}$ $F_m \}$ 
$\subsetneq$
$\{ C_1$ $\dot{\sqsubseteq}$ $D_1, \dots, C_n$ $\dot{\sqsubseteq}$ $D_n\}$ such that
$KB \cup \{E_1$ $\dot{\sqsubseteq}$ $F_1, \dots, E_m$ $\dot{\sqsubseteq}$ $F_m \} \models A$ $\dot{\sqsubseteq}$ $B$. 
\end{mydef}

\begin{mytheorem}
The basic algorithm for generating repairing actions for missing is-a relations in Figure \ref{basic-algorithm} produces minimal repairing actions according to the Definition \ref{def-min} for each missing is-a relation in the set M.
\end{mytheorem}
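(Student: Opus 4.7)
The plan is to verify the two requirements of Definition \ref{def-min} for every $ra \in Rep(A_i, B_i)$. Requirement (i), that $KB \cup ra \models A_i \dot{\sqsubseteq} B_i$, follows immediately from the preceding lemma, so the real work is requirement (ii): no proper subset of $ra$ entails the missing is-a relation. I would argue (ii) by contradiction. Suppose some $ra \in Rep(A_i, B_i)$ admits a proper subset $ra' \subsetneq ra$ with $KB \cup ra' \models A_i \dot{\sqsubseteq} B_i$. The goal is to exhibit a repairing action produced by the algorithm that is itself a subset of $ra'$; the subset-based redundancy removal in step 1.5.3 would then discard $ra$, contradicting the assumption that $ra \in Rep(A_i,B_i)$.

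To produce such a generated subset, the key step is to show that for every open leaf ABox $\mathcal{A}$ of the original completion graph $G$, the intersection $ra' \cap R_{\mathcal{A}}$ is non-empty. The argument mirrors the reasoning in the proof of the previous lemma: since $ra'$ repairs, running the tableaux on $A_i \sqcap \lnot B_i$ with $KB \cup ra'$ yields a completion graph $G_e$ in which every leaf is closed; by the correspondence established in the lemma on extended completion graphs, each open leaf $\mathcal{A}$ in $G$ has a corresponding leaf $\mathcal{A}_e$ in $G_e$ with $\mathcal{A} \subseteq \mathcal{A}_e$, and the closure of $\mathcal{A}_e$ must trace back to a pair of named concepts $P,N$ with $x{:}P$, $x{:}\lnot N$ already in $\mathcal{A}$ and $P \dot{\sqsubseteq} N \in ra'$, placing $P \dot{\sqsubseteq} N \in ra' \cap R_{\mathcal{A}}$. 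Given this non-emptiness, step 1.5.1 may pick one element of $ra' \cap R_{\mathcal{A}}$ for each open leaf $\mathcal{A}$, yielding a generated repairing action $ra'' \subseteq ra' \subsetneq ra$. Adding fewer TBox axioms cannot introduce incoherence (every model of the larger KB is a model of the smaller one), so $ra''$ also passes the coherence check in step 1.5.4. Consequently $ra''$ appears in $Rep(A_i, B_i)$ before step 1.5.3, and $ra$, being a strict superset, is eliminated—contradicting $ra \in Rep(A_i, B_i)$.

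The main obstacle is the non-emptiness claim $ra' \cap R_{\mathcal{A}} \neq \emptyset$. Justifying it rigorously requires analyzing why $\mathcal{A}_e$ closes and arguing that the ultimate witness of inconsistency can be attributed to a pair of named-concept assertions already present in the original $\mathcal{A}$, rather than only to literals derived through unfoldings triggered by $ra'$. The natural tool here is the dependency-graph construction already used in the preceding lemma: tag the nodes whose statements are inherited from $\mathcal{A}$, trace the chain of $\sqcap$, $\forall$, $\exists$, $\sqcup$ and unfolding steps that produce the contradicting literal in $\mathcal{A}_e$, and show by induction on this chain that the very first unfolding step enabled by a $P \dot{\sqsubseteq} N \in ra'$ acts on some $x{:}P$ that was already in $\mathcal{A}$ and whose negated counterpart $x{:}\lnot N$ is also in $\mathcal{A}$. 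Once this technical lemma is in place, the contradiction with step 1.5.3 closes the argument, and the theorem follows for each missing is-a relation in $M$.
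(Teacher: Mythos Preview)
The paper's own proof of this theorem is a single sentence: it calls the result trivial and points to the subset-redundancy removal in steps 1.5.3 and 4. Your plan goes well beyond this --- you try to justify \emph{why} step 1.5.3 suffices, by showing that any proper repairing subset $ra' \subsetneq ra$ of a generated $ra$ is itself dominated by some generated $ra'' \subseteq ra'$, forcing step 1.5.3 to discard $ra$. That is a reasonable and more honest line of argument than the paper's, and you correctly isolate the crux as the non-emptiness claim $ra' \cap R_{\mathcal{A}} \neq \emptyset$ for every open leaf $\mathcal{A}$.

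There is, however, a genuine gap at exactly that point. A minor issue first: Lemma~3 is stated in the direction ``open leaf in $G_e$ $\Rightarrow$ open leaf in $G$ contained in it'', whereas you invoke it as ``open leaf $\mathcal{A}$ in $G$ $\Rightarrow$ leaf $\mathcal{A}_e$ in $G_e$ with $\mathcal{A} \subseteq \mathcal{A}_e$''; the latter is plausible but is not what was proved. More seriously, your inductive sketch --- that the first unfolding triggered by some $P \dot{\sqsubseteq} N \in ra'$ acts on an $x{:}P$ already in $\mathcal{A}$ with $x{:}\lnot N$ \emph{also} already in $\mathcal{A}$ --- does not hold in general. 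Suppose the original KB already entails $N \dot{\sqsubseteq} Q$ and the open leaf $\mathcal{A}$ contains $x{:}P$ and $x{:}\lnot Q$ but not $x{:}\lnot N$. Then adding $P \dot{\sqsubseteq} N$ closes the extension of $\mathcal{A}$ via the chain $x{:}P \leadsto x{:}N \leadsto x{:}Q$, clashing with $x{:}\lnot Q$; yet $P \dot{\sqsubseteq} N \notin R_{\mathcal{A}}$ because $x{:}\lnot N \notin \mathcal{A}$. Hence $ra' \cap R_{\mathcal{A}}$ can be empty, the generated $ra'' \subseteq ra'$ you need may fail to exist, and the intended contradiction with step 1.5.3 does not go through as sketched. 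The paper's one-line proof does not confront this subtlety either; it simply asserts that redundancy removal handles minimality.
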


\begin{proof}
Trivial. This kind of redundancy is removed in step 1.5.3 and step 4.

\hspace*{11cm} $\clubsuit$
 \end{proof}

\subsection{Optimization}
 
The basic algorithm may produce many redundant solutions already in step 1. For instance, if the root node in the completion graph contains $x$:$A$ and $x$:$\lnot B$ with $A$ and $B$ named concepts, then $A$ $\dot{\sqsubseteq}$ $B$ appears in every open leaf ABox as a possible way of closing that ABox.
To deal with this issue we modify the algorithm such that at each node with an open ABox it generates $Pos_{x_j}$ and $Neg_{x_j}$. However, $Pos_{x_j}$ and $Neg_{x_j}$ only contain the concepts that are not already in a  $Pos_{x_j}$ and $Neg_{x_j}$ related to an ancestor node. 
For instance, if $A$ $\in$ $Pos_x$ is related to the root node, then $A$ will not appear in any $Pos_x$ related to another node. For each ABox we then generate a set 
$R_{\mathcal{A}}$ = $\bigcup_x$ \{ P $\dot{\sqsubseteq}$  N \} where P $\in$ $Pos_x$ for the current node or any ancestor node,  N $\in$ $Neg_x$ for the current node or any ancestor node, and at least one of P and N occurs in the current node. 
This allows us to reduce the redundancy in step 1. An open leaf ABox can now be closed by using an element from $R_{\mathcal{A}}$ from the leaf node or from any ancestor node. When generating repairing actions in step 1.5 we then make sure that when an element related to a non-leaf node is chosen, that no additional element from any descendant of that node is selected. For instance, if any element from the root's $R_{\mathcal{A}}$ is chosen, then no other elements should be chosen.  

As an example, let us reconsider the computation of the repairing actions related to MyPizza $\dot{\sqsubseteq}$  Fishy\-MeatyPizza for the acyclic terminology in Figure \ref{small-pizza-ontology-acyclic-terminology}. In the optimized version of the algorithm, we compute $P_{x_j}$, $N_{x_j}$ and $R_{\mathcal{A}}$ for every open ABox (see Figure \ref{rep-optimized-algorithm}). The root ABox (ABox 1 in Figure \ref{fig:cgraph}) has statements of the forms $x_j$:$A$ and  $x_j$:$\neg$$A$ and we thus create Pos$_x$ = \{MyPizza, Pizza\},  Neg$_x$ = \{FishyMeatyPizza\}, Pos$_y$ = \{AnchoviesTopping, $\overline{AnchoviesTopping}$, PizzaTopping\},  Neg$_y$ = $\emptyset$, Pos$_z$ = \{ParmaHamTopping, $\overline{ParmaHamTopping}$, PizzaTopping\},  Neg$_z$ = $\emptyset$. This leads to $R_{\mathcal{A}1}$ = \{MyPizza $\dot{\sqsubseteq}$ FishyMeatyPizza, Pizza $\dot{\sqsubseteq}$ FishyMeatyPizza\} for ABox 1. We also know now that any element of the $R_{\mathcal{A}1}$ will close all leaf nodes. For Abox 1.2  the new statements of the form $x_j$:$A$ and  $x_j$:$\neg$$A$ (i.e. not occurring in an ancestor) are $y$:$\neg$FishTopping and $z$:$\neg$FishTopping. Therefore, we create Pos$_x$ =  $\emptyset$,  Neg$_x$ = $\emptyset$, Pos$_y$ = $\emptyset$,  Neg$_y$ = \{FishTopping\}, Pos$_z$ = $\emptyset$,  Neg$_z$ = \{FishTopping\} for Abox 1.2. $R_{\mathcal{A}1.2}$  for Abox 1.2 contains the new ways to close this ABox (i.e. ways not occurring in ancestor nodes) and contains AnchoviesTopping $\dot{\sqsubseteq}$ FishTopping,
PizzaTopping $\dot{\sqsubseteq}$ FishTopping, and
ParmaHamTopping $\dot{\sqsubseteq}$ FishTopping. We now know that any of these will close all leaf ABoxes of Abox 1.2. After all $R_{\mathcal{A}}$ are computed for all open ABoxes, a leaf node can be closed using an element from its $R_{\mathcal{A}}$ or any $R_{\mathcal{A}}$ related to an ancestor node. For instance, ABox 1.2.2.2 can be closed using any element from $R_{\mathcal{A}1.2.2.2}$, $R_{\mathcal{A}1.2.2}$, $R_{\mathcal{A}1.2}$ and $R_{\mathcal{A}1}$.
When creating repairing actions we then make sure that when an element related to a non-leaf node is chosen, that no additional element from any descendant of that node is selected to close any leaf ABoxes that are descendants of that non-leaf node.  For instance, if AnchoviesTopping $\dot{\sqsubseteq}$ FishTopping related to ABox 1.2 is chosen, then no additional element is chosen to close any leaf ABoxes that are descendants of Abox 1.2.

\subsection{Extension}

The algorithm can be extended to generate additional repairing actions for every individual missing is-a relation. In step 1.5 if $A$ $\dot{\sqsubseteq}$ $B$ is used as one of the is-a relations in a repairing action then also $S$ $\dot{\sqsubseteq}$ $T$ where $S$ is a super-concept of $A$ and $T$ is a sub-concept of $B$ could be used. Therefore, the extended algorithm generates two sets of concepts for every is-a relation $A$ $\dot{\sqsubseteq}$ $B$ in a repairing action, Source set containing named super-concepts of $A$ and Target set containing named sub-concepts of $B$. Further, to not introduce non-validated equivalence relations where in the original ontology there are only is-a relations, we remove the super-concepts of $B$ from Source, and the sub-concepts of $A$ from Target.
 
Alternative repairing actions for a repairing action \{$A_1$ $\dot{\sqsubseteq}$ $B_1$,  $\dots$, $A_n$ $\dot{\sqsubseteq}$ $B_n$\} are then repairing actions \{$S_1$ $\dot{\sqsubseteq}$ $T_1$,  $\dots$, $S_n$ $\dot{\sqsubseteq}$ $T_n$\} such that $(S_i, T_i) \in Source(A_i, B_i) \times Target(A_i, B_i)$. This extension allows the algorithm to produce more informative repairing actions. 

Next, we prove the correctness of the proposed extension.
\begin{mytheorem}
If a missing is-a relation $A$ $\dot{\sqsubseteq}$ $B$ is repaired by a repairing action $\{C_1$ $\dot{\sqsubseteq}$ $D_1, $ $\dots$ $, C_n$ $\dot{\sqsubseteq}$ $D_n\}$ then $ A$ $\dot{\sqsubseteq}$ $B$ will also be repaired by $\{S_i$ $\dot{\sqsubseteq}$ $T_i | \forall i:1..n: S_i \in Source(C_i, D_i) \wedge T_i \in Target(C_i, D_i)\}$.
\end{mytheorem}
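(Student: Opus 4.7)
The plan is to argue by transitivity of is-a, exploiting the fact that, by the way $Source$ and $Target$ are constructed from the original ontology, the relationships $C_i \dot{\sqsubseteq} S_i$ and $T_i \dot{\sqsubseteq} D_i$ are already entailed by $KB$. This reduces the statement to the assumption that $\{C_1 \dot{\sqsubseteq} D_1, \dots, C_n \dot{\sqsubseteq} D_n\}$ is a valid repair.

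More concretely, I would proceed as follows. First, I would unpack the definitions of $Source(C_i,D_i)$ and $Target(C_i,D_i)$: by construction, $S_i \in Source(C_i,D_i)$ is a named super-concept of $C_i$ in the original ontology, so $KB \models C_i \dot{\sqsubseteq} S_i$; similarly, $T_i \in Target(C_i,D_i)$ is a named sub-concept of $D_i$, so $KB \models T_i \dot{\sqsubseteq} D_i$. Second, since the extended knowledge base $KB' = KB \cup \{S_i \dot{\sqsubseteq} T_i \mid 1 \le i \le n\}$ contains all axioms of $KB$, the above two entailments still hold in $KB'$, and together with the newly added $S_i \dot{\sqsubseteq} T_i$ they yield $KB' \models C_i \dot{\sqsubseteq} D_i$ by transitivity of the subsumption relation.

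Third, I would apply the hypothesis: since $\{C_1 \dot{\sqsubseteq} D_1, \dots, C_n \dot{\sqsubseteq} D_n\}$ repairs $A \dot{\sqsubseteq} B$, we have $KB \cup \{C_i \dot{\sqsubseteq} D_i\}_i \models A \dot{\sqsubseteq} B$. Combined with $KB' \models C_i \dot{\sqsubseteq} D_i$ for every $i$, monotonicity of entailment in description logics gives $KB' \models A \dot{\sqsubseteq} B$, which is exactly what the theorem claims.

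I do not foresee a serious obstacle; the result is essentially a monotonicity/transitivity argument. The only subtle point worth stating carefully is that the restriction in the construction of $Source$ and $Target$ (removing super-concepts of $D_i$ from $Source(C_i,D_i)$ and sub-concepts of $C_i$ from $Target(C_i,D_i)$) is irrelevant for soundness; it only serves to avoid introducing unintended equivalences, and every candidate pair $(S_i,T_i)$ still satisfies $KB \models C_i \dot{\sqsubseteq} S_i$ and $KB \models T_i \dot{\sqsubseteq} D_i$, which is all the chain of entailments requires.
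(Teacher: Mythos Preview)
Your proposal is correct and follows essentially the same argument as the paper's own proof: both unpack the definitions of $Source$ and $Target$ to obtain $KB \models C_i \dot{\sqsubseteq} S_i$ and $KB \models T_i \dot{\sqsubseteq} D_i$, chain these with the added $S_i \dot{\sqsubseteq} T_i$ via transitivity to get $KB' \models C_i \dot{\sqsubseteq} D_i$, and then invoke monotonicity of entailment together with the hypothesis. Your explicit remark that the pruning of $Source$/$Target$ is irrelevant for soundness is a nice clarification the paper leaves implicit.
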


\begin{proof}
Let KB be the knowledge base of the original ontology. As  $\{C_1$ $\dot{\sqsubseteq}$ $D_1, $ $\dots$, $ C_n$ $\dot{\sqsubseteq}$ $D_n\}$ is a repairing action for $A$ $\dot{\sqsubseteq}$ $B$, we know that KB $\cup$ $\{C_1$ $\dot{\sqsubseteq}$ $D_1, $ $\dots$, $ C_n$ $\dot{\sqsubseteq}$ $D_n\}$  $\models$  $A$ $\dot{\sqsubseteq} B$.

To prove that $A$ $\dot{\sqsubseteq}$ $B$ is repaired by $\{S_i$ $\dot{\sqsubseteq}$ $T_i | \forall i:1..n: S_i \in Source(C_i, D_i) \wedge T_i \in Target(C_i, D_i)\}$, we need to show that $A$ $\dot{\sqsubseteq}$ $B$ can be derived from the knowledge base of the extended ontoloy KB $\cup$ $\{S_i$ $\dot{\sqsubseteq}$ $T_i | \forall i:1..n: S_i \in Source(C_i, D_i) \wedge T_i \in Target(C_i, D_i)\}$.

As $S_i \in Source(C_i, D_i)$, we know that KB $\models$ $C_i$ $\dot{\sqsubseteq}$ $S_i$. Further as $T_i \in Target(C_i, D_i)$, we know that KB $\models$ $T_i$ $\dot{\sqsubseteq}$ $D_i$. 
Therefore, 
KB  $\cup$ $\{S_i$ $\dot{\sqsubseteq}$ $T_i | \forall i:1..n: S_i \in Source(C_i, D_i) \wedge T_i \in Target(C_i, D_i)\}$ $\models$  $C_i$ $\dot{\sqsubseteq}$ $S_i$ $\wedge$  $S_i$ $\dot{\sqsubseteq}$ $T_i$ $\wedge$ $T_i$ $\dot{\sqsubseteq}$ $D_i$, and therefore, KB  $\cup$ $\{S_i$ $\dot{\sqsubseteq}$ $T_i | \forall i:1..n: S_i \in Source(C_i, D_i) \wedge T_i \in Target(C_i, D_i)\}$ $\models$ $C_i$ $\dot{\sqsubseteq}$ $D_i$.

This shows that KB  $\cup$ $\{S_i$ $\dot{\sqsubseteq}$ $T_i | \forall i:1..n: S_i \in Source(C_i, D_i) \wedge T_i \in Target(C_i, D_i)\}$ entails all statements that KB $\cup$ $\{C_1$ $\dot{\sqsubseteq}$ $D_1, $ $\dots$, $ C_n$ $\dot{\sqsubseteq}$ $D_n\}$ entails and thus also $A$ $\dot{\sqsubseteq}$ $B$.

\hspace*{11cm} $\clubsuit$
\end{proof} 

As an example, consider the repairing action 
\{AnchoviesTopping $\dot{\sqsubseteq}$ FishTopping,  ParmaHamTopping $\dot{\sqsubseteq}$ MeatTopping\} for the missing is-a relations M = \{MyPizza $\dot{\sqsubseteq}$ FishyMeatyPizza, MyFruttiDiMare $\dot{\sqsubseteq}$ NonVegetarianPizza\} for the ontology in Figure \ref{small-pizza-ontology-acyclic-terminology}, as computed using the basic algorithm.
The Source set for AnchoviesTopping $\dot{\sqsubseteq}$ FishTopping contains all named super-concepts of AnchoviesTopping that are not super-concepts of FishTopping, i.e. \{AnchoviesTopping, PizzaTopping\} $\setminus$ \{FishTopping, PizzaTopping\} = \{AnchoviesTopping\}. 
The Target set for AnchoviesTopping $\dot{\sqsubseteq}$ FishTopping contains all named sub-concepts of FishTopping that are not sub-concepts of AnchoviesTopping, i.e. \{FishTopping\} $\setminus$ \{AnchoviesTopping\} = \{FishTopping\}.
For ParmaHamTopping $\dot{\sqsubseteq}$ MeatTopping the Source set is \{ParmaHamTopping, PizzaTopping\} $\setminus$ \{MeatTopping, PizzaTopping\} = \{ParmaHamTopping\}, and the Target set is \{MeatTopping, HamTopping\} $\setminus$ \{ParmaHamTopping\} = \{MeatTopping, HamTopping\}. In this small example, using Source and Target sets, we would obtain one additional repairing action \{AnchoviesTopping $\dot{\sqsubseteq}$ FishTopping,  ParmaHamTopping $\dot{\sqsubseteq}$ HamTopping\}.

\section{Implementation}
\label{section-implementation}

\begin{figure}[tb]
\begin{center}
	\includegraphics[width=1\textwidth]{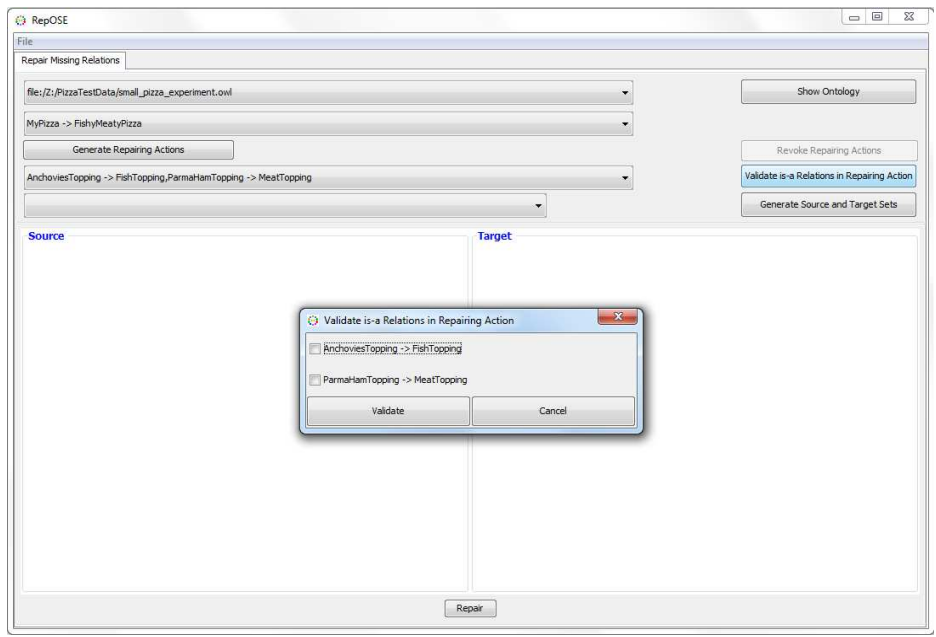}
        \caption{Screenshot - Validating is-a relations in a repairing action.}
	\label{Fig-Validate}
\end{center}
\end{figure}

\begin{figure}[tb]
\begin{center}
	\includegraphics[width=1\textwidth]{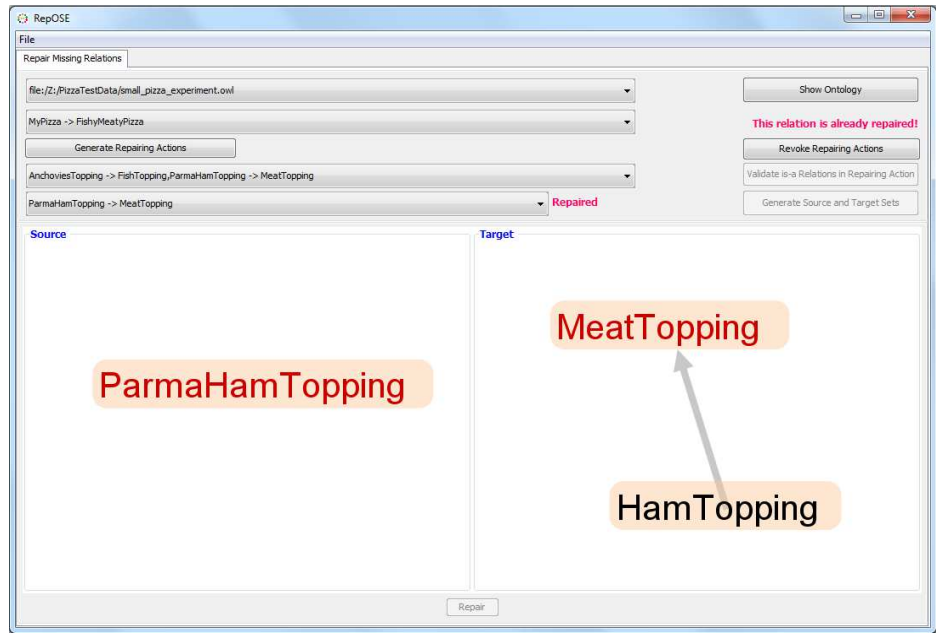}
        \caption{Screenshot - Repairing using Source and Target sets.}
	\label{Fig-Source-Target}
\end{center}
\end{figure}

We have implemented a system that supports the user to repair missing is-a relations. In our system the user loads the ontology and the missing is-a relations from the 'File' menu. The missing is-a relations are then shown in a drop-down list (e.g. MyPizza $\dot{\sqsubseteq}$ FishyMeatyPizza\footnote{In the system C $\dot{\sqsubseteq}$ D is shown as C $\rightarrow$ D.} in the second drop-down list in Figure \ref{Fig-Validate}). The user then chooses a missing is-a relation to repair.\footnote{As we repair one is-a relation at a time, there may be some redundancy in the solutions.} At any time the user can switch between different missing is-a relations. 
Once a missing is-a relation is chosen for repairing the user generates repairing actions for it by clicking the 'Generate Repairing Actions' button. 
This covers step 1 in Figure \ref{basic-algorithm} and was implemented in Java using Pellet (version 2.3.0) \cite{sirin2007}.
The satisfiability checker in Pellet was modified in order to extract full completion graphs. Furthermore, to increase performance and account for higher level of non-determinism, ontologies are first passed through Pellint \cite{Lin08} before running the algorithm. 
The computed repairing actions are shown in the drop-down list under the button. Each repairing action consists of one or more is-a relations. In Figure \ref{Fig-Validate} the user has chosen to repair MyPizza $\dot{\sqsubseteq}$ FishyMeatyPizza and the system has generated three repairing actions that do not introduce incoherence (\{MyPizza $\dot{\sqsubseteq}$ FishyMeatyPizza\}, \{AnchoviesTopping $\dot{\sqsubseteq}$ FishTopping, ParmaHamTopping $\dot{\sqsubseteq}$ MeatTopping\}, and \{AnchoviesTopping $\dot{\sqsubseteq}$ MeatTopping, ParmaHamTopping $\dot{\sqsubseteq}$ FishTopping\}). 
To repair this missing is-a relation the user has to succesfully deal with at least one of the repairing actions, i.e. add all is-a relations (or more informative is-a relations from Source and Target) in at least one of the repairing actions to the ontology.
In Figure \ref{Fig-Validate} the user has chosen the second repairing action. When repairing actions are added to the ontology, they will make the missing is-a relation derivable. However, it is not guaranteed that all the is-a relations in the repairing actions are also valid with respect to the domain. Therefore, a domain expert needs to validate the is-a relations in the repairing actions. When the user clicks the 'Validate is-a Relations in Repairing Action' button, a pop-up window (Figure \ref{Fig-Validate}) appears where the user can mark all the is-a relations that are correct with respect to the domain model. The repairing actions for all missing is-a relations and the ontology are updated according to the results of the validation. If an is-a relation is validated as incorrect according to the domain, all repairing actions that contain this incorrect is-a relation, for this and for all other missing is-a relations, are removed from the lists of the repairing actions. When an is-a relation is validated as correct it is added to the ontology and it is marked as correct in all repairing actions for all missing is-a relations. 
When all is-a relations in the current repairing action are validated as correct, they are shown in the last drop-down list (Figure \ref{Fig-Source-Target}).
Now the user can repair them one by one.

For each is-a relation within the repairing action the Source and Target sets are generated and displayed on the left and the right hand sides, respectively, within the panel under the drop-down lists (Figure \ref{Fig-Source-Target}). Both panels have zoom control and can be opened in a separate window. The concepts in the is-a relation under consideration are highlighted in red, existing asserted and inferred is-a relations are shown in grey, not yet repaired missing is-a relations in blue and is-a relations that were previously added for repairing missing is-a relations in black. In order to repair the is-a relation the user has to choose one concept from each of the sets and click the 'Repair' button.
In Figure \ref{Fig-Source-Target} the user has chosen to repair ParmaHamTopping $\dot{\sqsubseteq}$ MeatTopping with ParmaHamTopping $\dot{\sqsubseteq}$ HamTopping.
Upon repair the ontology is updated, i.e. the chosen is-a relation (ParmaHamTopping $\dot{\sqsubseteq}$ HamTopping) is added to the ontology. A red label next to the drop-down list shows the status (Repaired or Not Repaired) of the selected is-a relation. When all is-a relations within a repairing action are repaired the missing is-a relation is marked as repaired ('This relation is already repaired' label in Figure \ref{Fig-Source-Target}).
The other repairing actions are still available for review by the user. These may give information about other possible missing is-a relations.
The user can also revoke repairing actions (through the 'Revoke Repairing Actions' button). If the user revokes the repairing action, the missing is-a relation may become not repaired again and the is-a relations within the repairing action are marked as not repaired. All changes in the ontology are revoked and the user can start repairing this missing is-a relation again in the way just described.

\vspace*{-0.5cm}

\subsubsection{Example run}
\label{section-example}

As an example run, consider the ontology in Figure \ref{small-pizza-ontology}
and missing is-a relations MyPizza $\dot{\sqsubseteq}$  FishyMeatyPizza and MyFruttiDiMare $\dot{\sqsubseteq}$ NonVegetarianPizza. After loading the ontology and the missing is-a relations, we can choose a missing is-a relation to repair. Assume we choose MyFruttiDiMare $\dot{\sqsubseteq}$ NonVegetarianPizza and click the 'Generate Repairing Actions' button. The system will show three repairing actions in the drop-down list: 
\{MyFruttiDiMare $\dot{\sqsubseteq}$ NonVegetarianPizza\}, \{AnchoviesTopping  $\dot{\sqsubseteq}$ FishTopping\}, and \{AnchoviesTopping $\dot{\sqsubseteq}$ MeatTopping\}. We can choose to deal with AnchoviesTopping $\dot{\sqsubseteq}$ MeatTopping and validate this to be incorrect with respect to the domain. In this case all repairing actions containing this is-a relation will be removed. We could then choose AnchoviesTopping  $\dot{\sqsubseteq}$ FishTopping and validate it to be correct. All is-a relations in this repairing action (i.e.  AnchoviesTopping  $\dot{\sqsubseteq}$ FishTopping)  are validated to be correct and thus we can continue with the repair of AnchoviesTopping  $\dot{\sqsubseteq}$ FishTopping. In this small example the Source set only contains AnchoviesTopping and the Target set only contains FishTopping. Therefore, we click on these concepts and the 'Repair' button.  AnchoviesTopping  $\dot{\sqsubseteq}$ FishTopping will be marked as repaired and also the missing is-a relation MyFruttiDiMare $\dot{\sqsubseteq}$ NonVegetarianPizza will be marked as repaired.

We can then start repairing  MyPizza $\dot{\sqsubseteq}$  FishyMeatyPizza. The system would have generated as repairing actions that do not introduce incoherence \{MyPizza $\dot{\sqsubseteq}$  FishyMeatyPizza\}, \{Anchovies\-Topping $\dot{\sqsubseteq}$ FishTopping, ParmaHamTopping $\dot{\sqsubseteq}$ MeatTopping\} and \{Anchovies\-Topping $\dot{\sqsubseteq}$ MeatTopping, ParmaHamTopping $\dot{\sqsubseteq}$ FishTopping\}. However, as we earlier already validated  AnchoviesTopping $\dot{\sqsubseteq}$ MeatTopping to be incorrect with respect to the domain, the third repairing action has already been removed. When we choose the second repairing action, AnchoviesTopping $\dot{\sqsubseteq}$ FishTopping is already marked as correct (because of earlier validation) and only ParmaHamTopping $\dot{\sqsubseteq}$ MeatTopping needs to be validated. We validate this as correct and then choose to repair it. The Source set in this small example contains ParmaHamTopping and the Target set contains HamTopping and MeatTopping. Although we can add ParmaHamTopping $\dot{\sqsubseteq}$ MeatTopping, it is more informative (and correct with respect to the domain) to add ParmaHamTopping $\dot{\sqsubseteq}$ HamTopping. We therefore choose the latter. All is-a relations in this repairing action are then repaired and thus also MyPizza $\dot{\sqsubseteq}$  FishyMeatyPizza.

\section{Related work}
\label{section-related-work}

\subsubsection{Debugging ontologies}
{\it Detecting} missing is-a relations can be done in a number of ways (see Section \ref{section-introduction}). There is, however, not much work on the {\it repairing} of missing is-a structure. In \cite{LLT09} we addressed this in the setting of taxonomies. 

There is more work on the debugging of semantic defects.
Most of it aims at identifying and removing logical contradictions from an ontology. Standard reasoners are used to identify the existence of a contradiction, and provide support for resolving and eliminating it \cite{FMKPA08}. 
In \cite{Sch05} minimal sets of axioms are identified which need to be removed to render an ontology coherent. In \cite{KPSH06,KPSC06} strategies are described for repairing unsatisfiable concepts detected by reasoners, explanation of errors, ranking erroneous axioms, and generating repair plans. In \cite{HS05} the focus is on maintaining the consistency as the ontology evolves through a formalization of the semantics of change for ontologies.
In \cite{MST07} and \cite{JHQHS09} the setting is extended to repairing ontologies connected by mappings. In this case, semantic defects may be introduced by integrating ontologies. Both works assume that ontologies are more reliable than the mappings and try to remove some of the mappings to restore consistency. The solutions are often based on the computation of minimal unsatisfiability-preserving sets or minimal conflict sets. The work in \cite{QJH09} further characterizes the problem as mapping revision. Using belief revision theory, the authors give an analysis for the logical properties of the revision algorithms. Another approach for debugging mappings is proposed in \cite{WX08} where the authors focus on the detection of certain kinds of defects and redundancy.
The approach in \cite{JCHB09} deals with the inconsistencies introduced by the integration of ontologies, and unintended entailments validated by the user. 

Work that deals with both modeling and semantic defects includes \cite{CRVP09} where the authors propose an approach for detecting modeling and semantic defects within an ontology based on patterns and antipatterns. Some suggestions for repairing are also given. In \cite{LL11} we provided a method to detect and repair wrong and missing is-a structure in taxonomies connected by mappings. 

A different setting is the area of modular ontologies where the ontologies are connected by directional mappings and where knowledge propagation only occurs in one direction. Regarding the detection of semantic defects, within a framework based on distributed description logics, it is possible to restrict the propagation of local inconsistency to the whole set of ontologies (e.g. \cite{SBT05}). 

\vspace*{-0.5cm}

\subsubsection{Abductive reasoning in description logics}
In \cite{elsenbroich2006case} four different abductive reasoning tasks are defined - (conditionalized) concept abduction, ABox abduction, TBox abduction and knowledge base abduction. Concept abduction deals with finding sub-concepts. Abox abduction deals with retrieving abductively instances of concepts or roles that, when added to the knowledge base, allow the entailment of a desired ABox assertion. Knowledge base abduction includes both ABox and TBox abduction.
 
Most existing approaches for DL abduction focus on ABox and concept abduction and are mostly based on existing proof techniques such as semantic tableaux and resolution. Since the number of possible solutions is very large, the approaches introduce constraints. The work in \cite{klarman2011} proposes a goal-oriented approach where only actions which contribute to the solution are chosen in the proof procedures. 
The method is both complete and sound for consistent and semantically minimal solutions. Since the set of solutions can contain some inconsistent and non-minimal solutions additional checks are required.
A practical approach for ABox abduction, based on abductive logic programming, was proposed in \cite{du2011towards}. In order to use existing abductive logic programming systems it is necessary to do a transformation to a plain Datalog program. 
The solutions are consistent and minimal given a set of abduciles. However, the approach does not guarantee completeness since the translation to a Datalog program is approximate and in some cases a solution would not be found.
An approach for conditionalised concept abduction that uses a variation of the semantic tableaux and two labeling functions was proposed in \cite{colucci2004uniform}. The two labeling functions $T()$ and $F()$ represent true and false formulas in a tableaux where the solutions are formed from concepts which have at least one constraint in F() of every open branch. This choice is non-deterministic and can be used to select solutions based on some criteria. The algorithm also contains a consistency check which implies that produced solutions are always consistent.

There has not been much work related to TBox abduction, which is the most relevant abduction problem for this paper. The work in \cite{hubauer2010automata} proposes an automata-based approach to TBox abduction using abduciles representing axioms that can appear in solutions. It is based on a reduction to the axiom pinpointing problem which is then solved with automata-based methods. A PTIME algorithm is proposed for the language  $\mathcal{EL}$.

All of the presented approaches to description logic abduction work with relatively inexpressive ontologies, such as $\mathcal{EL}$ and $\mathcal{ALC}$. However, some recent work \cite{di2009tableaux} describes a type of conditionalised concept abduction called structural abduction which is applicable to $\mathcal{SH}$.

\section{Conclusion}
\label{section-conclusion}

This paper formalized repairing the is-a structure in ${\cal ALC}$ acyclic terminologies as a generalized TBox abduction problem, provided a solution based on semantic tableaux, and discussed a system.

There are a number of interesting aspects for future work. First, we intend to extend the algorithm to deal with more expressive ontologies. Further, it may be useful to consider also solutions introducing incoherence as they may lead to the detection of other kinds of modeling defects such as wrong is-a relations. In this case we do not assume anymore that the existing structure is correct. As a domain expert may need to deal with many possible solutions, other useful extensions are mechanisms for ranking the generated repairing actions and among those recommending repairing actions, e.g. based on domain knowledge.

\bibliographystyle{plain}
\bibliography{jist}

\begin{figure}
\begin{center}
	\includegraphics[angle=90,width=0.9\textwidth]{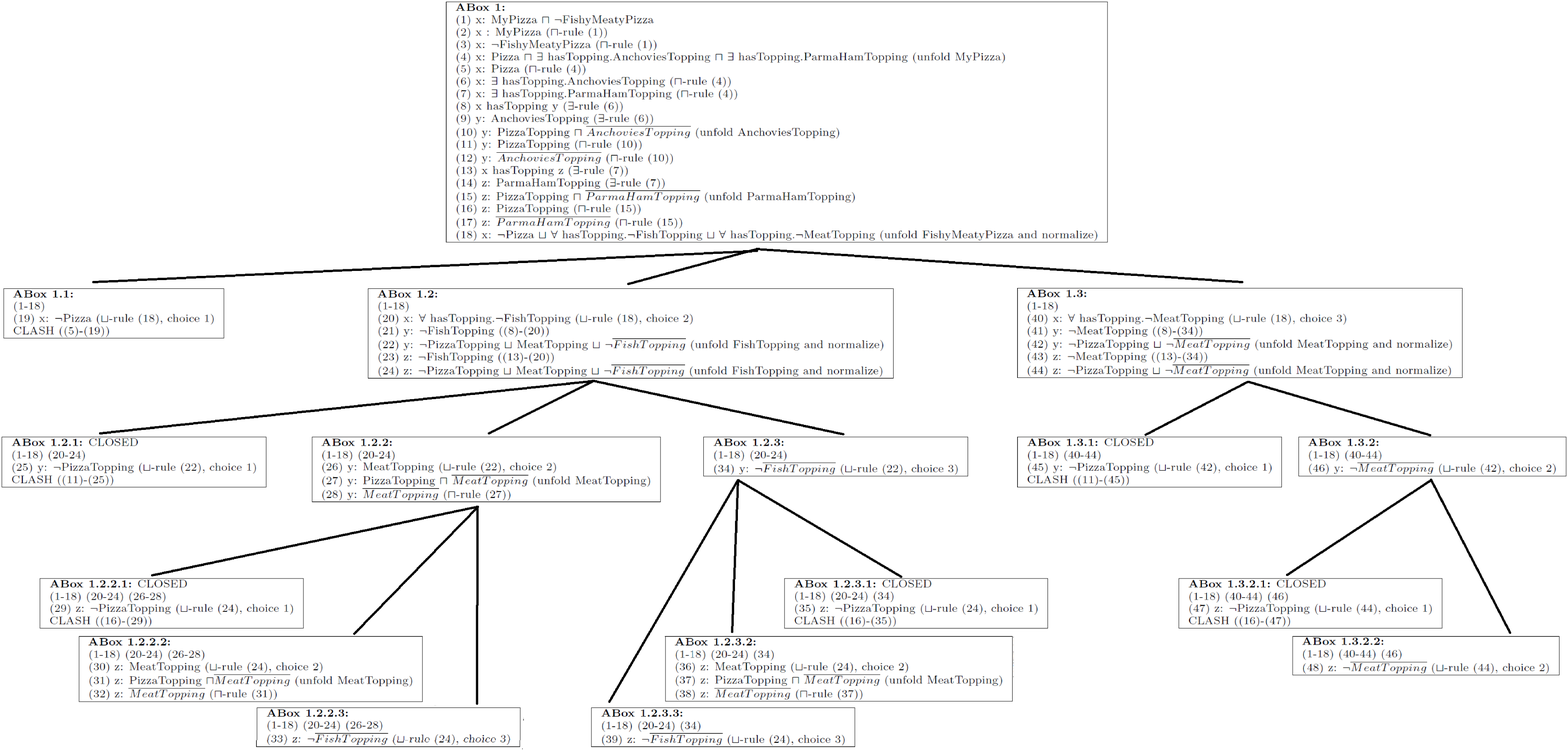}
\end{center}
\caption{Completion graph for $MyPizza \sqcap \lnot FishyMeatyPizza$.}
	\label{fig:cgraph}
\end{figure}

\begin{figure}[tb]
\begin{center}
\scriptsize
\begin{tabular}{| l |}
\hline
{\bf ABox 1.1}: CLOSED\\
\hline 
{\bf ABox 1.2.1}: CLOSED\\
\hline
{\bf ABox 1.2.2.1}: CLOSED\\
\hline
{\bf ABox 1.2.2.2}: \\
Pos$_x$ = \{MyPizza, Pizza\},\\
 Neg$_x$ = \{FishyMeatyPizza\} \\
Pos$_y$ = \{AnchoviesTopping, $\overline{AnchoviesTopping}$, PizzaTopping, MeatTopping, $\overline{MeatTopping}$\}, \\
 Neg$_y$ = \{FishTopping\} \\
Pos$_z$ = \{ParmaHamTopping, $\overline{ParmaHamTopping}$, PizzaTopping, MeatTopping, $\overline{MeatTopping}$\}, \\
Neg$_z$ = \{FishTopping\} \\
$R_{\mathcal{A}}$ = \{MyPizza $\dot{\sqsubseteq}$ FishyMeatyPizza, Pizza $\dot{\sqsubseteq}$ FishyMeatyPizza, 
AnchoviesTopping $\dot{\sqsubseteq}$ FishTopping, \\ \hspace*{0.8cm}
PizzaTopping $\dot{\sqsubseteq}$ FishTopping,
ParmaHamTopping $\dot{\sqsubseteq}$ FishTopping,
MeatTopping $\dot{\sqsubseteq}$ FishTopping\}\\
\hline 
{\bf ABox 1.2.2.3}: \\
Pos$_x$ = \{MyPizza, Pizza\}, \\ Neg$_x$ = \{FishyMeatyPizza\} \\
Pos$_y$ = \{AnchoviesTopping, $\overline{AnchoviesTopping}$, PizzaTopping, MeatTopping, $\overline{MeatTopping}$\}, \\ Neg$_y$ =  \{FishTopping\} \\
Pos$_z$ = \{ParmaHamTopping, $\overline{ParmaHamTopping}$, PizzaTopping\}, \\ Neg$_z$ =  \{FishTopping, $\overline{FishTopping}$\}\\
$R_{\mathcal{A}}$ = \{MyPizza $\dot{\sqsubseteq}$ FishyMeatyPizza, Pizza $\dot{\sqsubseteq}$ FishyMeatyPizza, 
AnchoviesTopping $\dot{\sqsubseteq}$ FishTopping,\\ \hspace*{0.8cm}
PizzaTopping $\dot{\sqsubseteq}$ FishTopping,
ParmaHamTopping $\dot{\sqsubseteq}$ FishTopping,
MeatTopping $\dot{\sqsubseteq}$ FishTopping\}\\
\hline
{\bf ABox 1.2.3.1}: CLOSED\\
\hline
{\bf ABox 1.2.3.2}: \\
Pos$_x$ = \{MyPizza, Pizza\}, \\ Neg$_x$ = \{FishyMeatyPizza\} \\
Pos$_y$ = \{AnchoviesTopping, $\overline{AnchoviesTopping}$, PizzaTopping\}, \\ Neg$_y$ = \{FishTopping, $\overline{FishTopping}$\} \\
Pos$_z$ = \{ParmaHamTopping, $\overline{ParmaHamTopping}$, PizzaTopping, MeatTopping, $\overline{MeatTopping}$\}, \\ Neg$_z$ = \{FishTopping\}\\
$R_{\mathcal{A}}$ = \{MyPizza $\dot{\sqsubseteq}$ FishyMeatyPizza, Pizza $\dot{\sqsubseteq}$ FishyMeatyPizza, 
AnchoviesTopping $\dot{\sqsubseteq}$ FishTopping,\\ \hspace*{0.8cm}
PizzaTopping $\dot{\sqsubseteq}$ FishTopping,
ParmaHamTopping $\dot{\sqsubseteq}$ FishTopping,
MeatTopping $\dot{\sqsubseteq}$ FishTopping\} \\
\hline 
{\bf ABox 1.2.3.3}: \\
Pos$_x$ = \{MyPizza, Pizza\}, \\ Neg$_x$ = \{FishyMeatyPizza\} \\
Pos$_y$ = \{AnchoviesTopping, $\overline{AnchoviesTopping}$, PizzaTopping\}, \\ Neg$_y$ = \{FishTopping, $\overline{FishTopping}$\} \\
Pos$_z$ = \{ParmaHamTopping, $\overline{ParmaHamTopping}$, PizzaTopping\}, \\ Neg$_z$ = \{FishTopping, $\overline{FishTopping}$\}\\
$R_{\mathcal{A}}$ =\{MyPizza $\dot{\sqsubseteq}$ FishyMeatyPizza, Pizza $\dot{\sqsubseteq}$ FishyMeatyPizza, 
 AnchoviesTopping $\dot{\sqsubseteq}$ FishTopping,\\ \hspace*{0.8cm}
PizzaTopping $\dot{\sqsubseteq}$ FishTopping,
ParmaHamTopping $\dot{\sqsubseteq}$ FishTopping\} \\
\hline 
{\bf ABox 1.3.1}: CLOSED\\
\hline
{\bf ABox 1.3.2.1}: CLOSED\\
\hline
{\bf ABox 1.3.2.2}: \\
Pos$_x$ =  \{MyPizza, Pizza\}, \\ Neg$_x$ = \{FishyMeatyPizza\} \\
Pos$_y$ =  \{AnchoviesTopping, $\overline{AnchoviesTopping}$, PizzaTopping\}, \\ Neg$_y$ = \{MeatTopping,$\overline{MeatTopping}$\}\\
Pos$_z$ = \{ParmaHamTopping, $\overline{ParmaHamTopping}$, PizzaTopping\}, \\ Neg$_z$ = \{MeatTopping,$\overline{MeatTopping}$\}\\
$R_{\mathcal{A}}$ =\{MyPizza $\dot{\sqsubseteq}$ FishyMeatyPizza, Pizza $\dot{\sqsubseteq}$ FishyMeatyPizza, AnchoviesTopping $\dot{\sqsubseteq}$ MeatTopping, \\ \hspace*{0.8cm}
PizzaTopping $\dot{\sqsubseteq}$ MeatTopping,
ParmaHamTopping $\dot{\sqsubseteq}$ MeatTopping\}\\
\hline\end{tabular}
\end{center}
\caption{Creating $R_{\mathcal{A}}$ for the Leaf ABoxes related to MyPizza $\dot{\sqsubseteq}$ FishyMeaty\-Pizza.}
\label{rep-basic-algorithm}
\end{figure}

\begin{figure}[tb]
\begin{center}
\scriptsize
\begin{tabular}{| l |}
\hline
{\bf ABox 1}: \\
Pos$_x$ = \{MyPizza, Pizza\},  Neg$_x$ = \{FishyMeatyPizza\} \\
Pos$_y$ = \{AnchoviesTopping, $\overline{AnchoviesTopping}$, PizzaTopping\},  Neg$_y$ = $\emptyset$ \\
Pos$_z$ = \{ParmaHamTopping, $\overline{ParmaHamTopping}$, PizzaTopping\},  Neg$_z$ = $\emptyset$ \\
$R_{\mathcal{A}}$ = \{MyPizza $\dot{\sqsubseteq}$ FishyMeatyPizza, Pizza $\dot{\sqsubseteq}$ FishyMeatyPizza\} \\
\hline
{\bf ABox 1.1}: CLOSED\\
\hline
{\bf ABox 1.2}: \\
Pos$_x$ =  $\emptyset$,  Neg$_x$ = $\emptyset$ \\
Pos$_y$ = $\emptyset$,  Neg$_y$ = \{FishTopping\}\\
Pos$_z$ = $\emptyset$,  Neg$_z$ = \{FishTopping\}\\
$R_{\mathcal{A}}$ =\{AnchoviesTopping $\dot{\sqsubseteq}$ FishTopping,
PizzaTopping $\dot{\sqsubseteq}$ FishTopping,
ParmaHamTopping $\dot{\sqsubseteq}$ FishTopping\} \\
\hline 
{\bf ABox 1.2.1}: CLOSED\\
\hline
{\bf ABox 1.2.2}: \\
Pos$_x$ =  $\emptyset$, Neg$_x$ = $\emptyset$ \\
Pos$_y$ = \{MeatTopping, $\overline{MeatTopping}$\},  Neg$_y$ = $\emptyset$ \\
Pos$_z$ = $\emptyset$,  Neg$_z$ = $\emptyset$ \\
$R_{\mathcal{A}}$ =\{MeatTopping $\dot{\sqsubseteq}$ FishTopping\}\\
\hline 
{\bf ABox 1.2.2.1}: CLOSED\\
\hline
{\bf ABox 1.2.2.2}: \\
Pos$_x$ =  $\emptyset$,  Neg$_x$ = $\emptyset$ \\
Pos$_y$ = $\emptyset$,  Neg$_y$ = $\emptyset$ \\
Pos$_z$ = \{MeatTopping, $\overline{MeatTopping}$\},  Neg$_z$ = $\emptyset$ \\
$R_{\mathcal{A}}$ = $\emptyset$\\
\hline 
{\bf ABox 1.2.2.3}: \\
Pos$_x$ =  $\emptyset$,  Neg$_x$ = $\emptyset$ \\
Pos$_y$ = $\emptyset$,  Neg$_y$ = $\emptyset$ \\
Pos$_z$ = $\emptyset$,  Neg$_z$ = \{$\overline{FishTopping}$\}\\
$R_{\mathcal{A}}$ = $\emptyset$\\
\hline
{\bf ABox 1.2.3}: \\
Pos$_x$ =  $\emptyset$,  Neg$_x$ = $\emptyset$ \\
Pos$_y$ = $\emptyset$,  Neg$_y$ = \{$\overline{FishTopping}$\}\\
Pos$_z$ = $\emptyset$,  Neg$_z$ = $\emptyset$ \\
$R_{\mathcal{A}}$ = $\emptyset$\\
\hline 
{\bf ABox 1.2.3.1}: CLOSED\\
\hline
{\bf ABox 1.2.3.2}: \\
Pos$_x$ =  $\emptyset$, Neg$_x$ = $\emptyset$ \\
Pos$_y$ = $\emptyset$,  Neg$_y$ = $\emptyset$ \\
Pos$_z$ = \{MeatTopping, $\overline{MeatTopping}$\}, Neg$_z$ = $\emptyset$ \\
$R_{\mathcal{A}}$ =\{MeatTopping $\dot{\sqsubseteq}$ FishTopping\} \\
\hline 
{\bf ABox 1.2.3.3}: \\
Pos$_x$ =  $\emptyset$, Neg$_x$ = $\emptyset$ \\
Pos$_y$ = $\emptyset$,  Neg$_y$ = $\emptyset$ \\
Pos$_z$ = $\emptyset$, Neg$_z$ = \{$\overline{FishTopping}$\}\\
$R_{\mathcal{A}}$ = $\emptyset$\\
\hline
{\bf ABox 1.3}: \\
Pos$_x$ =  $\emptyset$, Neg$_x$ = $\emptyset$ \\
Pos$_y$ = $\emptyset$, Neg$_y$ = \{MeatTopping\}\\
Pos$_z$ = $\emptyset$, Neg$_z$ = \{MeatTopping\}\\
$R_{\mathcal{A}}$ =\{AnchoviesTopping $\dot{\sqsubseteq}$ MeatTopping,
PizzaTopping $\dot{\sqsubseteq}$ MeatTopping,
ParmaHamTopping $\dot{\sqsubseteq}$ MeatTopping\}\\
\hline 
{\bf ABox 1.3.1}: CLOSED\\
\hline
{\bf ABox 1.3.2}: \\
Pos$_x$ =  $\emptyset$, Neg$_x$ = $\emptyset$ \\
Pos$_y$ = $\emptyset$, Neg$_y$ = \{$\overline{MeatTopping}$\}\\
Pos$_z$ = $\emptyset$,  Neg$_z$ = $\emptyset$ \\
$R_{\mathcal{A}}$ = $\emptyset$\\
\hline 
{\bf ABox 1.3.2.1}: CLOSED\\
\hline
{\bf ABox 1.3.2.2}: \\
Pos$_x$ =  $\emptyset$, Neg$_x$ = $\emptyset$ \\
Pos$_y$ = $\emptyset$,  Neg$_y$ = $\emptyset$ \\
Pos$_z$ = $\emptyset$, Neg$_z$ = \{$\overline{MeatTopping}$\}\\
$R_{\mathcal{A}}$ = $\emptyset$\\
\hline\end{tabular}
\end{center}
\caption{Creating $R_{\mathcal{A}}$ for the ABoxes related to MyPizza $\dot{\sqsubseteq}$ FishyMeaty\-Pizza - optimized algorithm}
\label{rep-optimized-algorithm}
\end{figure}

\end{document}